\newcommand{\m}{{\sf {EigenGCN}}}
\newcommand{\pooling}{{\sf {EigenPooling}}}
\def\BibTeX{{\rm B\kern-.05em{\sc i\kern-.025em b}\kern-.08emT\kern-.1667em\lower.7ex\hbox{E}\kern-.125emX}}
\begin{document}

%
\title{Graph Convolutional Networks with EigenPooling}

\author{Yao Ma}

\affiliation{%
	\institution{Michigan State University}
}
\email{mayao4@msu.edu}

\author{Suhang Wang}

\affiliation{%
	\institution{Pennsylvania State University}
}
\email{szw494@psu.edu}

\author{Charu C. Aggarwal}

\affiliation{%
	\institution{IBM T. J. Watson Research Center}
}
\email{charu@us.ibm.com}

\author{Jiliang Tang}

\affiliation{%
	\institution{Michigan State University}
}
\email{tangjili@msu.edu}

\begin{abstract}
Graph neural networks, which generalize deep neural network models to graph structured data, have attracted increasing attention in recent years. They usually learn node representations by transforming, propagating and aggregating node features and have been proven to improve the performance of many graph related tasks such as node classification and link prediction. To apply graph neural networks for the graph classification task, approaches to generate the \textit{graph representation} from node representations are demanded. A common way is to globally combine the node representations. However, rich structural information is overlooked. Thus a hierarchical pooling procedure is desired to preserve the graph structure during the graph representation learning. There are some recent works on hierarchically learning graph representation analogous to the pooling step in conventional convolutional neural (CNN) networks. However, the local structural information is still largely neglected during the pooling process. In this paper, we introduce a pooling operator $\pooling$ based on graph Fourier transform, which can utilize the node features and local structures during the pooling process. We then design pooling layers based on the pooling operator, which are further combined with traditional GCN convolutional layers to form a graph neural network framework $\m$ for graph classification. Theoretical analysis is provided to understand $\pooling$ from both local and global perspectives. Experimental results of the graph classification task on $6$ commonly used benchmarks demonstrate the effectiveness of the proposed framework. 
\end{abstract}

%
%

%

%

%

\maketitle

\section{Introduction}

Recent years have witnessed increasing interests in generalizing neural networks for graph structured data. The stream of research on this topic is usually under the name of ``Graph Neural Networks''~\cite{scarselli2009graph}, which typically involves transforming, propagating and aggregating node features across the graph. Among them, some focus on node-level representation learning~\cite{kipf2016semi,hamilton2017inductive,schlichtkrull2018modeling} while others investigate learning graph-level representation~\cite{li2015gated,henaff2015deep,duvenaud2015convolutional,defferrard2016convolutional,bruna2013spectral,ying2018hierarchical,gao2019graph1,gao2019graph2}. While standing from different perspectives, these methods have been proven to advance various graph related tasks. The methods focusing on node representation learning have brought improvement to tasks such as node classification~\cite{kipf2016semi,hamilton2017inductive,schlichtkrull2018modeling,gao2018large,gao2019graph1,gao2019graph2} and link prediction~\cite{schlichtkrull2018modeling} and those methods working on graph-level representation learning have mainly facilitated graph classification. In this paper, {\it we work on graph level representation learning with a focus on the task of graph classification.} 
 
The task of graph classification is to predict the label of a given graph utilizing its associated features and graph structure. Graph Neural Networks can extract graph representation while using all associated information. Majority of existing graph neural networks~\cite{dai2016discriminative,duvenaud2015convolutional,gilmer2017neural,li2015gated} have been designed to generate good node representations, and then globally summarize the node representations as the graph representation. These methods are inherently ``flat'' since they treat all the nodes equivalently when generating graph representation using the node representations. In other words, the entire graph structure information is totally neglected during this process. However, nodes are naturally of different statuses and roles in a graph, and they should contribute differently to the graph level representation. Furthermore, graphs often have different local structures (or subgraphs), which contain vital graph characteristics. For instance, in a graph of a protein, atoms (nodes) are connected via bonds (edges); some local structures, which consist of groups of atoms and their direct bonds, can represent some specific functional units, which, in turn, are important to tell the functionality of the entire protein~\cite{shervashidze2011weisfeiler,duvenaud2015convolutional,borgwardt2005protein}. These local structures are also not captured during the global summarizing process. To generate the graph representation which preserves the local and global graph structures, a hierarchical pooling process, analogous to the pooling process in conventional convolutional neural (CNN) networks~\cite{krizhevsky2012imagenet}, is needed.  
 
 There are very recent works investigating the pooling procedure for graph neural networks~\cite{ying2018hierarchical,defferrard2016convolutional,fey2018splinecnn,simonovsky2017dynamic}. These methods group nodes into subgraphs (supernodes), coarsen the graph based on these subgraphs and then the entire graph information is reduced to the coarsened graph by generating features of supernodes from their corresponding nodes in subgraphs. However, when pooling the features for supernodes, average pooling or max pooling have been usually adopted where the structures of these group nodes (the local structures) are still neglected. With the local structures, the nodes in the subgraphs are of different statuses and roles when they contribute to the supernode representations. It is challenging to design a general pooling operator while incorporating the local structure information as 1) the subgraphs may contain different numbers of nodes, thus a fixed size pooling operator cannot work for all subgraphs; and 2) the subgraphs could have very different structures, which may require different approaches to summarize the information for the supernode representation. To address the aforementioned challenges, we design a novel pooling operator $\pooling$ based on the eigenvectors of the subgraphs, which naturally have the same size of each subgraph and can effectively capture the local structures when summarizing node features for supernodes. $\pooling$ can be used as pooling layers to stack with any graph neural network layers to form a novel framework $\m$ for graph classification. Our major contributions can be summarized as follows:
 
 \begin{itemize}
     \item We introduce a novel pooling operator $\pooling$, which can naturally summarize the subgraph information while utilizing the subgraph structure;
     \item We provide theoretical understandings on $\pooling$ from both local and global perspectives;
     \item We incorporate pooling layers based on $\pooling$ into existing graph neural networks as a novel framework $\m$ for representation learning for graph classification; and 
     \item We conduct comprehensive experiments on numerous real-world graph classification benchmarks to demonstrate the effectiveness of the proposed pooling operator.
 \end{itemize} 
 

\section{The Proposed Framework -- $\m$} \label{sec:proposed} 

In this paper, we aim to develop a Graph Neural Networks (GNN) model, which consists of convolutional layers and pooling layers, to learn graph representations such that graph level classification can be applied. Before going to the details, we first introduced some notations and the problem setting. 

\vspace{0.5em}
\noindent{}\textbf{Problem Setting:} A graph can be represented as $\mathcal{G} =\{\mathcal{E},\mathcal{V}\}$, where $\mathcal{V} = \{v_1,\dots, v_N\}$ is the set of $N$ nodes and $\mathcal{E}$ is the set of edges. The graph structure information can also be represented by an adjacency matrix ${\bf A}\in \mathbb{R}^{N\times N}$. Furthermore, each node in the graph is associated with node features and we use ${\bf X}\in \mathbb{R}^{N\times d}$ to denote the node feature matrix, where $d$ is the dimension of features. Note that this node feature matrix can also be viewed as a $d$-dimensional graph signal~\cite{shuman2013emerging} defined on the graph $\mathcal{G}$. In the graph classification setting, we have a set of graphs $\{\mathcal{G}_i\}$, each graph $\mathcal{G}_i$ is associated with a label $y_i$. The task of the graph classification is to take the graph (structure information and node features) as input and predict its corresponding label. To make the prediction, it is important to extract useful information from both graph structure and node features. We aim to design graph convolution layers and $\pooling$ to hierarchically extract graph features, which finally learns a vector representation of the input graph for graph classification.

\subsection{An Overview of \m}
In this work, we build our model based on Graph Convolutional Networks (GCN)~\cite{kipf2016semi}, which has been demonstrated to be effective in node-level representation learning. While the GCN model is originally designed for semi-supervised node classification, we only discuss the part for node representation learning but ignoring the classification part. The GCN is stacked by several convolutional layers and a single convolutional layer can be written as:
\begin{align}
    {\bf F}^{i+1} = ReLU(\tilde{\bf D}^{-\frac{1}{2}}\tilde{\bf A} \tilde{\bf D}^{-\frac{1}{2}} {\bf F}^{i} {\bf W}^{i})
\end{align}
where ${\bf F}^{i+1}\in \mathbb{R}^{N\times d_{i+1}}$ is the output of the $i$-th convolutional layer for $i>0$ and ${\bf F}^{0} = {\bf X}$ denotes the input node features. A total number of $I$ convolutional layers are stacked to learn node representations and the output matrix ${\bf F}^{I}$ can be viewed as the final node representations learned by the GCN model. 

As we described above, the GCN model has been designed for learning node representations. In the end, the output of the GCN model is a matrix instead of a vector. The procedure of the GCN is rather ``flat'', as it can only ``pass message'' between nodes through edges but cannot summarize the node information into the higher level graph representation. A simple way to summarize the node information to generate graph level representation is global pooling. For example, we could use the average of the node representations as the graph representation. However, in this way, a lot of key information is ignored and the graph structure is also totally overlooked during the pooling process. 

To address this challenge, we propose eigenvector based pooling layers $\pooling$ to hierarchically summarize node information and generate graph representation. An illustrative example is demonstrated in Figure~\ref{fig:general_framework}. In particular, several pooling layers are added between convolutional layers. Each of the pooling layers pools the graph signal defined on a graph into a graph signal defined on a coarsened version of the input graph, which consists of fewer nodes. Thus, the design of the pooling layers consists of two components: 1) graph coarsening, which divides the graph into a set of subgraphs and form a coarsened graph by treating subgraphs as supernodes; and 2) transform the original graph signal information into the graph signal defined on the coarsened graph with $\pooling$. We coarsen the graph based on a subgraph partition. Given a subgraph partition with no overlaps between subgraphs, we treat each of the subgraphs as a supernode. To form a coarsened graph of the supernodes, we determine the connectivity between the supernodes by the edges across the subgraphs. During the pooling process, for each of the subgraphs, we summarize the information of the graph signal on the subgraph to the supernode. With graph coarsening, we utilize the graph structure information to form coarsened graphs, which makes it possible to learn representations level by level in a hierarchical way. With $\pooling$, we can learn node features of the coarsened graph that exploits the subgraph structure as well as the node features of the input graph.

\begin{figure}
    \centering
    \includegraphics[scale=0.3]{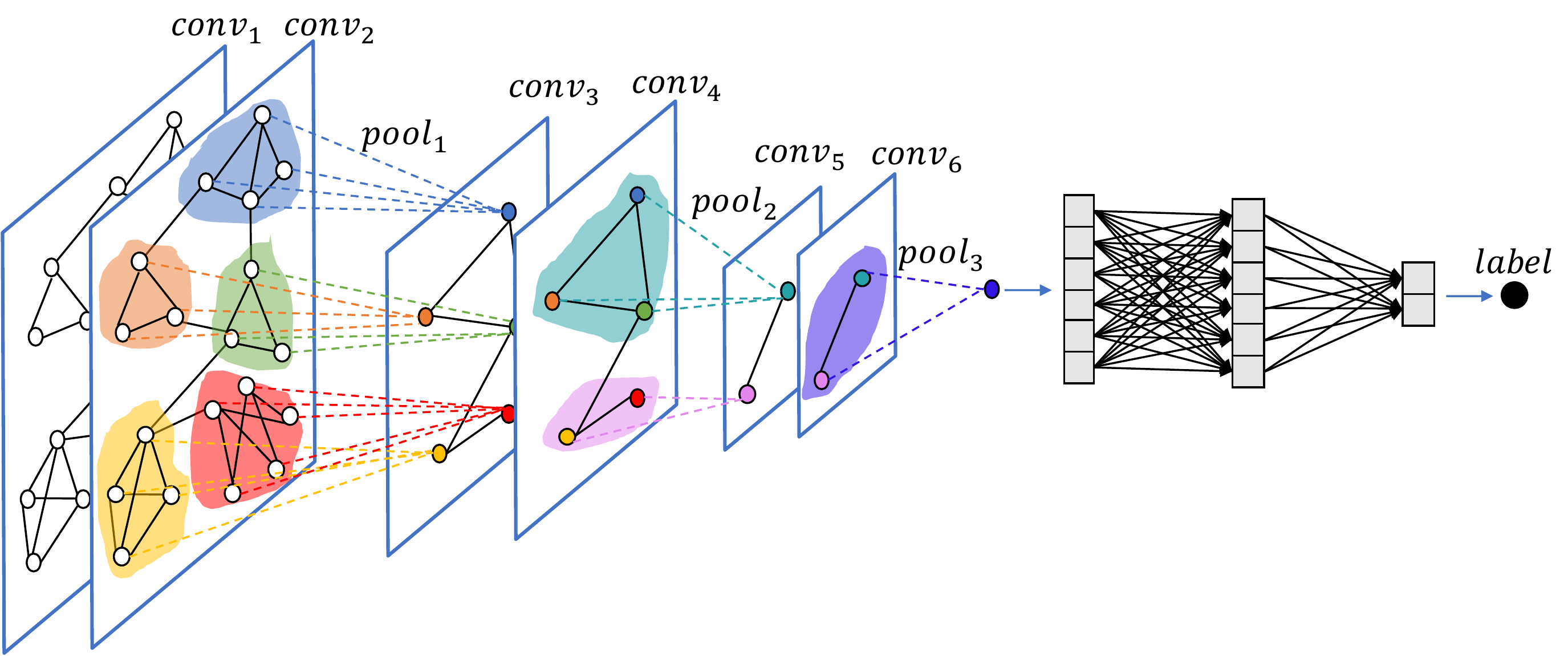}
    \vskip -1em
    \caption{An illustrative example of the general framework}
    \label{fig:general_framework}
\end{figure}
Figure~\ref{fig:general_framework} shows an illustrative example, where a binary graph classification is performed. In this illustrative example, the graph is coarsened three times and finally becomes a single supernode. The input is a graph signal (the node features), which can be multi-dimensional. For the ease of illustration, we do not show the node features on the graph. Two convolutional layers are applied to the graph signal. Then, the graph signal is pooled to a signal defined on the coarsened graph. This procedure (two convolution layers and one pooling layer) is repeated two more times and the graph signal is finally pooled to a signal on a single node. This pooled signal on the single node, which is a vector, can be viewed as the graph representation. The graph representation then goes through several fully connected layers and the prediction is made upon the output of the last layer. Next, we introduce details of graph coarsening and $\pooling$ of \m. 

\subsection{Graph Coarsening}
In this subsection, we introduce how we perform the graph coarsening. As we mentioned in the previous subsection, the coarsening process is based on subgraph partition. There are different ways to separate a given graph to a set of subgraphs with no overlapping nodes. In this paper, we adopt spectral clustering to obtain the subgraphs, so that we can control the number of the subgraphs, which, in turn, determines the pooling ratio. We leave other options as future work. Given a set of subgraphs, we treat them as supernodes and build the connections between them as similar in~\cite{tremblay2016subgraph}. An example of the graph coarsening and supernodes is shown in Figure~\ref{fig:general_framework}, where a subgraph and its supernodes are denoted using the same color. Next, we introduce how to mathematically describe the subgraphs, supernodes, and their relations. 

Let ${\bf c}$ be a partition of a graph $\mathcal{G}$, which consists of $K$ connected subgraphs $\{\mathcal{G}^{(k)}\}_{k=1}^K.$ For the graph $\mathcal{G}$, we have the adjacency matrix $\mathbf{A}\in \mathbb{R}^{N\times N}$ and the feature matrix $\mathbf{X}\in \mathbb{R}^{N\times d}$. Let $N_k$ denote the number of nodes in the subgraph $\mathcal{G}^{(k)}$ and $\Gamma^{(k)}$ is the list of nodes in subgraph $\mathcal{G}^{(k)}$. Note that each of the subgraph can be also viewed as a supernode. For each subgraph $\mathcal{G}^{(k)}$, we can define a sampling operator $\mathbf{C}^{(k)} \in  \mathbb{R}^{N\times N_k}$ as follows:
\begin{align}
    \mathbf{C}^{(k)}[i,j] = 1 \quad \text{if and only if} \quad \Gamma^{(k)}(j) = v_i,
\end{align}
where ${\bf C}^{(k)}[i,j]$ denotes the element in the $(i,j)$-th position of ${\bf C}^{(k)}[i,j]$ and $\Gamma^{(k)}(j)$ is the $j$-th element in the node list $\Gamma^{(k)}$. This operator provides a relation between nodes in the subgraph $\mathcal{G}^{(k)}$ and the nodes in the original graph. Given a single dimensional graph signal ${\bf x} \in \mathbb{R}^{N\times 1}$ defined on the original entire graph, the induced signal that is only defined on the subgraph $\mathcal{G}^{(k)}$ can be written as  
\begin{equation}
    {\bf x}^{(k)} = ({\bf C}^{(k)})^T {\bf x}.
\end{equation}
On the other hand, we can also use $\mathbf{C}^{(k)}$ to up-sample a graph signal ${\bf x}^{(k)}$ defined only on the subgraph $\mathcal{G}^{(k)}$ to the entire graph $\mathcal{G}$ by 
\begin{equation}
    {\bar{\bf x}}=\mathbf{C}^{(k)} {\bf x}^{(k)}.
\end{equation}
It keeps the values of the nodes in the subgraph untouched while setting the values of all the other nodes that do not belong to the subgraph to $0$. The operator can be applied to multi-dimensional signal ${\bf X}\in {\mathcal{R}^{N\times d}}$ in a similar way. The induced adjacency matrix ${\bf A}^{(k)} \in \mathbb{R}^{N_k\times N_k}$ of the subgraph $\mathcal{G}^{(k)}$, which only describes the connection within the subgraph $\mathcal{G}^{(k)}$, can be obtained as
\begin{align}
    \mathbf{A}^{(k)} = (\mathbf{C}^{(k)} )^T \mathbf{A}  \mathbf{C}^{(k)}.
\end{align}
The intra-subgraph adjacency matrix of the graph $\mathcal{G}$, which only consists of the edges inside each subgraph, can be represented as
\begin{align}
    \mathbf{A}_{int} = \sum\limits_{k=1}^{K} {\bf C}^{(k)} {\bf A}^{(k)}  ({\bf C}^{(k)})^T.
\end{align}
Then the inter-subgraph adjacency matrix of graph $\mathcal{G}$, which only consists of the edges between subgraphs, can be represented as ${\bf A}_{ext}={\bf A} - {\bf A}_{int}$. 

Let  $\mathcal{G}_{coar}$ denote the coarsened graph, which consists of the supernodes and their connections. We define the assignment matrix ${\bf S} \in \mathbb{R}^{N\times K}$, which indicates whether a node belongs to a specific subgraph as:
\begin{align*}
    {\bf S}[i,j] = 1 \quad \text{if and only if } v_i \in \Gamma^{(j)}.
\end{align*}
Then, the adjacency matrix of the coarsened graph is given as 
\begin{align}
    {\bf A}_{coar} = {\bf S}^T {\bf A}_{ext} {\bf S}. 
\end{align}

With Graph Coarsening, we can obtain the connectivity of $\mathcal{G}_{coar}$, i.e., ${\bf A}_{coar}$. Obviously, ${\bf A}_{coar}$ encodes the network structure information of $\mathcal{G}$. Next, we describe how to obtain the node features $\mathbf{X}_{coar}$ of $\mathcal{G}_{coar}$ using $\pooling$. With ${\bf A}_{coar}$ and $\mathbf{X}_{coar}$, we can stack more layers of GCN-GraphCoarsening-$\pooling$ to learn higher level representations of the graph for classification.

\subsection{Eigenvector-Based Pooling -- $\pooling$}

In this subsection, we introduce $\pooling$, aiming to obtain $\mathbf{X}_{coar}$ that encodes network structure information and node features of $\mathcal{G}$. Globally, the pooling operation is to transform a graph signal defined on a given graph to a corresponding graph signal defined on the coarsened version of this graph. It is expected that the important information of the original graph signal can be largely preserved in the transformed graph signal. Locally, for each of the subgraph, we summarize the features of the nodes in this subgraph to a single representation of the supernode. It is necessary to consider the structure of the subgraph when we perform the summarizing, as the subgraph structure also encodes important information. However, common adopted pooling methods such as max pooling~\cite{ying2018hierarchical,defferrard2016convolutional} or average pooling~\cite{duvenaud2015convolutional} ignored the graph structure. In some works~\cite{niepert2016learning}, the subgraph structure is used to find a canonical ordering of the nodes, which is, however, very difficult and expensive. In this work, to use the structure of the subgraphs, we design the pooling operator based on the graph spectral theory by facilitating the eigenvectors of the Laplacian matrix of the subgraph. Next, we first briefly review the graph Fourier transform and then introduce the design of $\pooling$ based on graph Fourier transform. 

\subsubsection{Graph Fourier Transform}\label{sec:fourier}
Given a graph $\mathcal{G} = \{\mathcal{E},\mathcal{V}\}$ with $\mathbf{A}\in \mathbb{R}^{N\times N}$ being the adjacency matrix and ${\bf X}\in \mathbb{R}^{N\times d}$ being the node feature matrix. Without loss of generality, for the following description, we consider $d=1$, i.e., $\mathbf{x} \in \mathbb{R}^{N\times 1}$, which can be viewed as a single dimensional graph signal defined on the graph $\mathcal{G}$~\cite{sandryhaila2013discrete}. This is the spatial view of a graph signal, which maps each node in the graph to a scalar value (or a vector if the graph signal is multi-dimensional). Analogous to the classical signal processing, we can define graph Fourier transform~\cite{shuman2013emerging} and spectral representation of the graph signal in the spectral domain. To define the graph signal in the spectral domain, we need to use the Laplacian matrix~\cite{chung1997spectral} $\mathbf{L} = \mathbf{D} - \mathbf{A}$, where $\mathbf{D}$ is the diagonal degree matrix with $\mathbf{D}[i,i] = \sum\limits_{j=1}^N A[i,j]$. The Laplacian matrix ${\bf L}$ can be used to define the ``smoothness'' of a graph signal~\cite{shuman2013emerging} as follows:
\begin{align}
    s({\bf x}) =  {\bf x}^T {\bf L} {\bf x} = \frac{1}{2} \sum\limits_{i,j}^{N} {\bf A}[i,j] ({\bf x}[i]- {\bf x}[j])^2.
\end{align}
$s({\bf x})$ measures the smoothness of the graph signal ${\bf x}$. The smoothness of a graph signal depends on how dramatically the value of connected nodes can change. The smaller $s({\bf x})$, the more smooth it is. For example, for a connected graph, a graph signal with the same value on all the nodes has a smoothness of $0$, which means ``extremely smooth'' with no change. 

As $\mathbf{L}$ is a real symmetric semi-positive definite matrix, it has a completed set of orthonormal eigenvectors $\{u_l\}_{l=1}^{N}$. These eigenvectors are also known as the graph Fourier modes~\cite{shuman2013emerging}, which are associated with the ordered real non-negative eigenvalues $\{\lambda_l\}_{l}^{N}$. Given a graph signal ${\bf x}$, the graph Fourier transform can be obtained as follows
\begin{align}
    \hat{\bf x} = \mathbf{U^T x},
\end{align}
where $\mathbf{U}=[u_1,\dots, u_{N}] \in \mathbb{R}^{N\times N}$ is the matrix consists of the eigenvectors of $\mathbf{L}$. The vector $\hat{\bf x}$ obtained after the transform is the representation of the graph signal in the spectral domain. Correspondingly, the inverse graph Fourier transform, which transfers the spectral representation back to the spatial representation, can be denoted as: 
\begin{align} \label{eq:rec}
    \mathbf{x} = \mathbf{U}\hat{\bf x}.
\end{align}

Note that we can also view each the eigenvector $u_l$ of the Laplacian matrix $\mathbf{L}$ as a graph signal, and its corresponding eigenvalue $\lambda_l$ can measure its ``smoothness''. For any of the eigenvector $u_l$, we have: 
\begin{align}
    s({\bf u}_l) = {\bf u}_l ^T {\bf L} {\bf u}_l = {\bf u}_l^T \lambda_l {\bf u}_l = \lambda_l.
\end{align}
The eigenvectors (or Fourier modes) are a set of base signals with different ``smoothness'' defined on the graph $\mathcal{G}$. Thus, the graph Fourier transform of a graph signal $\bf{x}$ can be also viewed as linearly decomposing the signal $\bf x$ into the set of base signals. $\hat{\bf x}$ can be viewed as the coefficients of the linear combination of the base signals to obtain the original signal ${\bf x}$. 

\subsubsection{The Design of Pooling Operators}
Since graph Fourier transform can transform graph signal to spectral domain which takes into consideration both graph structure and graph signal information, we adopt graph Fourier transform to design pooling operators, which pool the graph signal defined on a given graph $\mathcal{G}$ to a signal defined on its coarsened version $\mathcal{G}_{coar}$. The design of the pooling operator is based on graph Fourier transform of the subgraphs $\{\mathcal{G}^{k}\}_{k=1}^K$. Let $\mathbf{L}^{(k)}$ denote the Laplacian matrix of the subgraph $\mathcal{G}^{(k)}$. We denote the eigenvectors of the Laplacian matrix ${\bf L}^{(k)}$ as $u^{(k)}_1,\dots, u^{(k)}_{N_k}$. We then use the up-sampling operator $C^{(k)}$ to up-sample these eigenvectors (base signals on this subgraph) into the entire graph and get the up-sampled version as:
\begin{align} \label{eq:bar_u_lk}
   \bar{\bf u}^{(k)}_l = {\bf C}^{(k)} {\bf u}^{(k)}_l, l=1\dots N_k.
\end{align}
With the up-sampled eigenvectors, we organize them into matrices to form pooling operators. Let $\Theta_l \in \mathbb{R}^{N\times K}$ denote the pooling operator consisting of all the $l$-th eigenvectors from all the subgraphs
\begin{align} \label{eq:Theta_l}
    \Theta_l = [\bar{u}^{(1)}_l,\dots,\bar{u}^{(K)}_l]
\end{align}

Note that the subgraphs are not necessary all with the same number of nodes, which means that the number of eigenvectors can be different. Let $N_{max} = \max\limits_{k=1,\dots,K} N_k$ be the largest number of nodes among all the subgraphs. Then, for a subgraph $\mathcal{G}^{(k)}$ with $N_k$ nodes, we set $\mathbf{u}^{(k)}_{l}$ for $N_k < l \leq  N_{max}$ as ${\bf 0} \in \mathbb{R}^{N_k\times 1}$. The pooling process with $l$-th pooling operator $\Theta_l$ can be described as
\begin{align} \label{eq:X_l}
    {\bf X}_l = \Theta_l^T{\bf X} 
\end{align}
where ${\bf X}_l\in \mathbb{R}^{K\times d}$ is the pooled result using the $l$-th pooling operator. The $k$-th row of ${\bf X}_l$ contains the information pooled from the $k$-th subgraph, which is the representation of the $k$-th supernode. 

Following this construction, we build a set of $N_{max}$ pooling operators. To combine the information pooled by different pool operators, we can concatenate them together as follows: 

\begin{align}
    {\bf X}_{pooled} = [{\bf X_0}, \dots,{\bf X}_{N_{max}}].
\end{align}
where ${\bf X}_{pooled} \in \mathbb{R}^{K\times d\cdot N_{max}}$ is the final pooled results. 
For efficient computation, instead of using the results pooled by all the pooling operators, we can choose to only use the first $H$ of them as follows: 
\begin{align}
    {\bf X}_{coar}={\bf X}_{pooled} = [{\bf X_0}, \dots,{\bf X}_{H}].
\end{align}
In Section~\ref{sec:local} and Section~\ref{sec:global}, we will show that with $H \ll N_{max}$, we can still preserve most of the information. We will further empirically investigate the effect of choice of $H$ in Section~\ref{sec:experimental_results}



\section{Theoretical Analysis of $\pooling$} \label{sec:theoretical}
In this section, we provide a theoretical analysis of $\pooling$ by understanding it from local and global perspectives. 
We prove that the pooling operation can preserve useful information to be processed by the following GCN layers. We also verify that $\m$ is permutation invariant, which lays the foundation for graph classification with $\m$.

\subsection{A Local View of \pooling}
\label{sec:local}

In this subsection, we analyze the pooling operator from a local perspective focusing on a specific subgraph $\mathcal{G}^{(k)}$. For the subgraph $\mathcal{G}^{(k)}$, the pooling operator tries to summarize the nodes' features and form a representation for the corresponding supernode of the subgraph. For a pooling operator $\Theta_l$, the part that is effective on the subgraph $\mathcal{G}^{(k)}$, is only the up-sampled eigenvector $\bar{\bf u}^{(k)}_l$ as the other eigenvectors have $0$ values on the subgraph $\mathcal{G}^{(k)}$. Without the loss of generality, let's consider a single dimensional graph signal ${\bf x} \in \mathbb{R}^{N\times 1}$ defined on the $\mathcal{G}$, the pooling operation on $\mathcal{G}^{(k)}$ can be represented as:
\begin{align}
    (\bar{\bf u}^{(k)}_l )^T  {\bf x} = ({\bf u}^{(k)}_l)^T{\bf x}^{(k)},
\end{align}
which is the Fourier coefficient of the Fourier mode ${\bf u}^{(k)}_l$ of the subgraph $\mathcal{G}^{(k)}$. Thus, from a local perspective, the pooling process is a graph Fourier transform of the graph signal defined on the subgraph. As we introduced in the Section~\ref{sec:fourier}, each of the Fourier modes (eigenvectors) is associated with an eigenvalue, which measures its smoothness. The Fourier coefficient of the corresponding Fourier mode provides the information to indicate the importance of this Fourier mode to the signal. The coefficient summarizes the graph signal information utilizing both the node features and the subgraph structure as the smoothness is related to both of them. Each of the coefficients characterizes a different property (smoothness) of the graph signal. Using the first $H$ coefficients while discarding the others means that we focus more on the ``smoother'' part of the graph signal, which is common in a lot of applications such as signal denoising and compression~\cite{tremblay2016subgraph,chen2014signal,narang2012perfect}. Therefore, we can use the squared summation of the coefficients to measure how much information can be preserved as shown in the following theorem.  

\begin{theorem}
Let ${\bf x}$ be a graph signal defined on the graph $\mathcal{G}$ and $\mathbf{U} = [{\bf u}_1, \dots, {\bf u}_{N}]$ be the Fourier modes of this graph. Let $\hat{\bf x}$ be the corresponding Fourier coefficients, i.e., $\hat{\bf x} = \mathbf{U}^T \mathbf{x}$. Let ${\bf x}' = \sum\limits_{l=1}^{H}\hat{\bf x}[l]\cdot {\bf u}_l$ be the signal re-constructed using only the first $H$ Fourier modes. Then $\frac{||\hat{\bf x}[1:H]||^2_2}{||\hat{\bf x}||^2_2}$ can measure the information being preserved by this re-construction. Here $\hat{\bf x}[1:H]$ denotes the vector consisting of the first $H$ elements of $\hat{\bf x}$.

\end{theorem}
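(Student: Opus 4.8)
The plan is to recognize this statement as a Parseval-type energy identity for the orthonormal basis of Fourier modes, and to make precise the phrase ``information being preserved'' by identifying it with the energy --- the squared $\ell_2$ norm --- of the signal. Concretely, I would interpret the quantity $\|\hat{\bf x}[1:H]\|_2^2 / \|\hat{\bf x}\|_2^2$ as the fraction of the original signal's energy retained by the truncated reconstruction ${\bf x}'$, equivalently $1$ minus the relative reconstruction error $\|{\bf x}-{\bf x}'\|_2^2 / \|{\bf x}\|_2^2$. This is the standard notion used in signal compression and denoising, which the surrounding text already invokes.

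First I would use the fact, established in Section~\ref{sec:fourier}, that $\mathbf{L}$ real symmetric positive semi-definite implies $\mathbf{U}=[{\bf u}_1,\dots,{\bf u}_N]$ is an orthonormal matrix, so ${\bf u}_i^T{\bf u}_j=\delta_{ij}$ and $\mathbf{U}^T\mathbf{U}=\mathbf{I}$. Combined with the inverse transform ${\bf x}=\mathbf{U}\hat{\bf x}$ from Eq.~\eqref{eq:rec}, this yields the energy-preservation identity $\|{\bf x}\|_2^2 = {\bf x}^T{\bf x} = \hat{\bf x}^T\mathbf{U}^T\mathbf{U}\hat{\bf x} = \|\hat{\bf x}\|_2^2$. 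Next I would write the inverse transform as ${\bf x}=\sum_{l=1}^{N}\hat{\bf x}[l]\,{\bf u}_l$ and split it into the kept part ${\bf x}'=\sum_{l=1}^{H}\hat{\bf x}[l]\,{\bf u}_l$ and the discarded tail ${\bf r}=\sum_{l=H+1}^{N}\hat{\bf x}[l]\,{\bf u}_l$, so ${\bf x}={\bf x}'+{\bf r}$.

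Invoking orthonormality once more, the cross terms vanish and I obtain $\|{\bf x}'\|_2^2=\sum_{l=1}^{H}\hat{\bf x}[l]^2=\|\hat{\bf x}[1:H]\|_2^2$ and $\|{\bf x}-{\bf x}'\|_2^2=\|{\bf r}\|_2^2=\sum_{l=H+1}^{N}\hat{\bf x}[l]^2=\|\hat{\bf x}\|_2^2-\|\hat{\bf x}[1:H]\|_2^2$. Dividing through by $\|{\bf x}\|_2^2=\|\hat{\bf x}\|_2^2$ gives $\|{\bf x}'\|_2^2/\|{\bf x}\|_2^2=\|\hat{\bf x}[1:H]\|_2^2/\|\hat{\bf x}\|_2^2$ and $\|{\bf x}-{\bf x}'\|_2^2/\|{\bf x}\|_2^2 = 1-\|\hat{\bf x}[1:H]\|_2^2/\|\hat{\bf x}\|_2^2$, which is exactly the asserted interpretation. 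I would also observe that this ratio lies in $[0,1]$ and equals $1$ precisely when ${\bf x}$ lies in the span of the first $H$ Fourier modes, so a value near $1$ certifies negligible information loss and justifies the choice $H\ll N_{max}$.

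The computation is elementary; the only real ``obstacle'' is the imprecision of the statement itself --- ``measure the information being preserved'' has no predefined meaning --- so the substantive part of the proof is the modeling decision to use energy ($\ell_2$ norm squared) as the figure of merit and to connect it, via Parseval's identity, to the reconstruction error and to the low-pass/smoothness reasoning developed just before the theorem. Once that identification is in place, everything else follows in a single line.
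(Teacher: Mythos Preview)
Your proposal is correct and follows essentially the same approach as the paper: both use the orthonormality of $\mathbf{U}$ (equivalently, Parseval's identity) together with the expansion ${\bf x}=\sum_{l=1}^N\hat{\bf x}[l]\,{\bf u}_l$ from Eq.~\eqref{eq:rec} to conclude $\|{\bf x}'\|_2^2/\|{\bf x}\|_2^2=\|\hat{\bf x}[1:H]\|_2^2/\|\hat{\bf x}\|_2^2$. Your version is slightly more explicit (the $\mathbf{x}={\bf x}'+{\bf r}$ split, the complementary reconstruction-error formula, and the $[0,1]$ observation), but the underlying argument is identical.
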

\begin{proof}
According to Eq.(\ref{eq:rec}), ${\bf x}$ can be written as ${\bf x} = \sum\limits_{l=1}^{N} \hat{{\bf x}}[l] \cdot {\bf u}_l$. Since $\mathbf{U}$ is orthogonal, we have
\begin{align}
    \frac{\|{\bf x}'\|_2^2}{\|{\bf x}\|_2^2} = \frac{\| \sum\limits_{l=1}^{H} \hat{\bf x}[l]\cdot {\bf u}_l\|^2_2}{\|\sum\limits_{l=1}^{N} \hat{{\bf x}}[l] \cdot {\bf u}_l\|_2^2} = \frac{\|\hat{\bf x}[1:H]\|^2_2}{\|\hat{\bf x}\|^2_2}
\end{align}
which completes the proof.
\end{proof}

It is common that for natural graph signal that the magnitude of the spectral form of the graph signal is concentrated on the first few coefficients~\cite{sandryhaila2013discrete,shuman2013emerging}, which means that $\frac{\|\hat{\bf x}[1:H]\|^2_2}{\|\hat{\bf x}\|^2_2} \approx 1$ for $H\ll N_k$. In other words, by using the first $H$ coefficients, we can \textit{preserve the majority of the information} while \textit{reducing the computational cost}. We will empirically verify it in the experiment section.

\subsection{A Global View of \pooling} \label{sec:global}
In this subsection, we analyze the pooling operators from a global perspective focusing on the entire graph $\mathcal{G}$. The pooling operators we constructed can be viewed as a filterbank~\cite{tremblay2016subgraph}. Each of the filters in the filterbank filters the given graph signal and obtains a new graph signal. In our case, the filtered signal is defined on the coarsened graph $\mathcal{G}_{coar}$. Without the loss of generality, we consider a single dimensional signal ${\bf x}\in \mathbb{R}^{N\times 1}$ of $\mathcal{G}$, then the filtered signals are $\{{\bf x}_l\}_{l=1}^{N_{max}}$. Next, we describe some key properties of these pooling operators. 

\vspace{0.5em}
\noindent{}\textbf{Property 1: Perfect Reconstruction:} The first property is that when $N_{max}$ number of filters are used, the input graph signal can be perfectly reconstructed from the filtered signals.
\begin{lemma} \label{lemma_1}
The graph signal ${\bf x}$ can be perfectly reconstructed from its filtered signals $\{{\bf x}_l\}_{l=1}^{N_{max}}$ together with the pooling operators $\{ \Theta_l\}_{l=1}^{N_{max}}$ as $\mathbf{x} = \sum\limits_{l=1}^{N_{max}}  \Theta_l {\bf x}_l$.
\end{lemma}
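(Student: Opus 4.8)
The plan is to prove the stronger operator identity $\sum_{l=1}^{N_{max}} \Theta_l \Theta_l^T = {\bf I}_N$, from which the lemma is immediate: specializing Eq.~(\ref{eq:X_l}) to $d=1$ gives ${\bf x}_l = \Theta_l^T {\bf x}$, so $\sum_{l=1}^{N_{max}} \Theta_l {\bf x}_l = \big(\sum_{l=1}^{N_{max}} \Theta_l \Theta_l^T\big){\bf x} = {\bf x}$. So the whole proof reduces to establishing that operator identity.

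First I would expand each $\Theta_l \Theta_l^T$ column-wise using the definition $\Theta_l = [\bar{\bf u}^{(1)}_l,\dots,\bar{\bf u}^{(K)}_l]$ from Eq.~(\ref{eq:Theta_l}) together with $\bar{\bf u}^{(k)}_l = {\bf C}^{(k)} {\bf u}^{(k)}_l$ from Eq.~(\ref{eq:bar_u_lk}), obtaining $\Theta_l \Theta_l^T = \sum_{k=1}^K {\bf C}^{(k)} {\bf u}^{(k)}_l ({\bf u}^{(k)}_l)^T ({\bf C}^{(k)})^T$. Summing over $l$ and interchanging the two finite sums yields $\sum_{l=1}^{N_{max}} \Theta_l \Theta_l^T = \sum_{k=1}^K {\bf C}^{(k)} \big(\sum_{l=1}^{N_{max}} {\bf u}^{(k)}_l ({\bf u}^{(k)}_l)^T\big) ({\bf C}^{(k)})^T$.

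Next I would collapse the inner sum for each fixed $k$. By the padding convention, ${\bf u}^{(k)}_l = {\bf 0}$ for $N_k < l \le N_{max}$, so the inner sum is $\sum_{l=1}^{N_k} {\bf u}^{(k)}_l ({\bf u}^{(k)}_l)^T$; and since ${\bf L}^{(k)}$ is real symmetric, $\{{\bf u}^{(k)}_l\}_{l=1}^{N_k}$ is a complete orthonormal basis of $\mathbb{R}^{N_k}$, making this inner sum exactly ${\bf I}_{N_k}$ (this is the same completeness relation used in Theorem~1, now applied inside each subgraph). Hence $\sum_{l=1}^{N_{max}} \Theta_l \Theta_l^T = \sum_{k=1}^K {\bf C}^{(k)} ({\bf C}^{(k)})^T$. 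Finally I would note that ${\bf C}^{(k)} ({\bf C}^{(k)})^T$ is the diagonal $0$/$1$ matrix whose $i$-th diagonal entry is $1$ iff $v_i \in \Gamma^{(k)}$; since $\{\Gamma^{(k)}\}_{k=1}^K$ is a partition of $\mathcal{V}$ with no overlaps, every node lies in exactly one subgraph and these diagonal matrices sum to ${\bf I}_N$. Chaining the three displays gives $\sum_{l=1}^{N_{max}} \Theta_l \Theta_l^T = {\bf I}_N$, and the lemma follows.

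The argument is essentially bookkeeping, so I do not expect a genuine obstacle; the one place needing care is making the padding convention and the "disjoint and covering" property of the subgraph partition explicit, since it is exactly these two facts that let the per-subgraph completeness relations assemble into a global identity — if the subgraphs overlapped, the diagonal matrices would overcount shared nodes and perfect reconstruction would fail. I would also remark that the same computation works verbatim for the multi-dimensional signal ${\bf X} \in \mathbb{R}^{N\times d}$, giving ${\bf X} = \sum_{l=1}^{N_{max}} \Theta_l {\bf X}_l$.
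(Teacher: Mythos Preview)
Your proposal is correct and follows essentially the same route as the paper: both arguments interchange the sums over $l$ and $k$, invoke the completeness relation $\sum_{l=1}^{N_k} {\bf u}^{(k)}_l ({\bf u}^{(k)}_l)^T = {\bf I}_{N_k}$ inside each subgraph (using the padding convention to drop the extra terms), and then use the partition property to reassemble the pieces into the full signal. The only cosmetic difference is that you prove the operator identity $\sum_l \Theta_l \Theta_l^T = {\bf I}_N$ first and then apply it to ${\bf x}$, whereas the paper carries ${\bf x}$ through the computation and finishes with $\sum_{k=1}^K {\bf C}^{(k)} {\bf x}^{(k)} = {\bf x}$; your framing makes the role of the partition slightly more explicit, but the underlying algebra is identical.
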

\vspace{-1em}
\begin{proof}
With the definition of $\Theta_l$ given in Eq.(\ref{eq:Theta_l}), we have
\begin{equation} \label{eq:sum_theta_x}
  \sum\limits_{l=1}^{N_{max}}  \Theta_l {\bf x}_l = \sum\limits_{l=1}^{N_{max}} \sum\limits_{k=0}^K \bar{\bf u}^{(k)}_l \cdot {\bf x}_l[k] = \sum\limits_{k=0}^K \sum\limits_{l=1}^{N_{max}}  \bar{\bf u}^{(k)}_l \cdot {\bf x}_l[k]
\end{equation}
From Eq.(\ref{eq:X_l}), we know that $\mathbf{x}_l[k] = ({\bar{\mathbf{u}}_l^{(k)}})^T \mathbf{x}$. Together with the fact that $\bar{\mathbf{u}}_l^{(k)} = \mathbf{C}^{(k)}\mathbf{u}_l^{(k)}$ in Eq.(\ref{eq:bar_u_lk}), we can rewrite  $\sum\limits_{l=1}^{N_{max}}  \bar{\bf u}^{(k)}_l \cdot {\bf x}_l[k]$ as 
\begin{equation}
    \sum\limits_{l=1}^{N_{max}}  \bar{\bf u}^{(k)}_l \cdot {\bf x}_l[k] =  \mathbf{C}^{(k)} (\sum_{l=1}^{N_{max}} \mathbf{u}_l^{(k)} {\mathbf{u}_l^{(k)}}^T) {\mathbf{C}^{(k)}}^T \mathbf{x}
\end{equation}
Obviously, $\sum_{l=1}^{N_{max}} \mathbf{u}_l^{(k)} {\mathbf{u}_l^{(k)}}^T = \sum_{l=1}^{N_{K}} \mathbf{u}_l^{(k)} {\mathbf{u}_l^{(k)}}^T = \mathbf{I}$, since that $\{ \mathbf{u}_l^{(k)} \}_{l=1}^{N_k}$ are orthonormal and $\{\mathbf{u}_l^{(k)}\}_{l=N_k+1}^{N_{max}}$ are all $\mathbf{0}$ vectors. Thus, $   \sum\limits_{l=1}^{N_{max}}  \bar{\bf u}^{(k)}_l \cdot {\bf x}_l[k] = \mathbf{C}^{(k)} \mathbf{x}^{(k)}$. Substitute this to Eq.(\ref{eq:sum_theta_x}), we arrive at
\begin{equation}
    \sum\limits_{l=1}^{N_{max}}  \Theta_l {\bf x}_l = \sum_{k=1}^{K} \mathbf{C}^{(k)} \mathbf{x}^{(k)} = \mathbf{x}
\end{equation}
which completes the proof.

\end{proof}
From Lemma~\ref{lemma_1}, we know if $N_{max}$ number of filters are chosen, the filtered signals $\{ {\bf x}_l\}_{l=1}^{N_{max} }$ can preserve all the information from $\mathbf{x}$. Thus, together with graph coarsening, eigenvector pooling can preserve the signal information of the input graph and can enlarge the receptive filed, which allows us to finally learn a vector representation for graph classification. 

\vspace{0.5em}
\noindent{}\textbf{Property 2: Energy/Information Preserving} The second property is that the filtered signals preserve all the energy when $N_{max}$ filters are chosen. To show this, we first give the following lemma, which serves as a tool for demonstrating property 2.
\begin{lemma} \label{lemma:orthogonal}
All the columns in the operators $\{ \Theta_l\}_{l=1}^{N_{max}}$ are orthogonal to each other.
\end{lemma}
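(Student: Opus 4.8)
The plan is to show that for any two pooling operators $\Theta_l$ and $\Theta_m$ (with $1 \le l, m \le N_{max}$), every column of $\Theta_l$ is orthogonal to every column of $\Theta_m$, treating separately the case $l = m$ with distinct columns and the case $l \neq m$ with arbitrary columns. Recall from Eq.~(\ref{eq:Theta_l}) that the $k$-th column of $\Theta_l$ is the up-sampled eigenvector $\bar{\bf u}^{(k)}_l = {\bf C}^{(k)} {\bf u}^{(k)}_l$. So the inner product of the $k$-th column of $\Theta_l$ and the $k'$-th column of $\Theta_m$ is $(\bar{\bf u}^{(k)}_l)^T \bar{\bf u}^{(k')}_m = ({\bf u}^{(k)}_l)^T ({\bf C}^{(k)})^T {\bf C}^{(k')} {\bf u}^{(k')}_m$.

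The key structural fact to establish first is that $({\bf C}^{(k)})^T {\bf C}^{(k')} = {\bf 0}$ whenever $k \neq k'$, and $({\bf C}^{(k)})^T {\bf C}^{(k)} = {\bf I}_{N_k}$. This follows directly from the definition of the sampling operator ${\bf C}^{(k)}[i,j] = 1$ iff $\Gamma^{(k)}(j) = v_i$: each column of ${\bf C}^{(k)}$ is a distinct standard basis vector ${\bf e}_{v_i}$, so $({\bf C}^{(k)})^T {\bf C}^{(k)}$ picks out a permutation-like identity, while for $k \neq k'$ the subgraphs share no nodes (the partition has no overlaps), so no standard basis vector appears as a column of both ${\bf C}^{(k)}$ and ${\bf C}^{(k')}$, forcing the product to vanish. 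Given this, the cross-subgraph case $k \neq k'$ is immediate: the inner product is zero regardless of $l$ and $m$. For the same-subgraph case $k = k'$, the inner product reduces to $({\bf u}^{(k)}_l)^T {\bf u}^{(k)}_m$, which is $\delta_{lm}$ because $\{{\bf u}^{(k)}_l\}_{l=1}^{N_k}$ are orthonormal eigenvectors of ${\bf L}^{(k)}$ (and the padded vectors for $l > N_k$ are zero, which also gives a zero inner product, so those trivially do not violate orthogonality with anything); hence distinct columns ($l \neq m$) of the same subgraph are orthogonal, and when $l = m$ we get a nonzero norm only for the same column, i.e.\ distinct columns are orthogonal.

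I do not anticipate a genuine obstacle here — the lemma is essentially bookkeeping built on the no-overlap property of the partition and the orthonormality of each subgraph's Fourier basis. The one point requiring mild care is the treatment of the zero-padded "eigenvectors" $\mathbf{u}^{(k)}_l$ for $N_k < l \le N_{max}$: I would note explicitly that these contribute zero columns to $\Theta_l$, so the orthogonality claim among columns of the $\Theta_l$'s holds vacuously for those and does not affect the argument. Assembling these observations gives the claim, and it sets up Property 2 (energy preservation), since orthogonality of all columns of $[\Theta_1, \dots, \Theta_{N_{max}}]$ together with Lemma~\ref{lemma_1} lets one invoke a Parseval-type identity for the filtered signals.
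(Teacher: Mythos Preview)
Your proposal is correct and follows essentially the same approach as the paper: orthogonality across different subgraphs comes from the disjoint supports of the up-sampled vectors (which you phrase algebraically as $({\bf C}^{(k)})^T {\bf C}^{(k')} = {\bf 0}$ for $k \neq k'$), and orthogonality within a subgraph comes from the orthonormality of the Laplacian eigenbasis. The paper's proof is simply a terser version of yours, stating these two facts without writing out the $({\bf C}^{(k)})^T {\bf C}^{(k')}$ computation or the explicit handling of the zero-padded vectors.
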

\vspace{-1em}
\begin{proof}
    By definition, we know that, for the same $k$, i.e, the same subgraph, ${\bf u}^{(k)}_l, l=1,\dots N_{max}$ are orthogonal to each other, which means $\bar{\bf u}^{(k)}_l, l=1,\dots N_{max}$ are also orthogonal to each other. In addition, all the $\bar{\bf u}^{(k)}_l$ with different $k$ are also orthogonal to each other as they only have non-zero values on different subgraphs. 
\end{proof}

With the above lemma, we can further conclude that the $\ell_2$ norm of graph signal $\mathbf{x}$ is equal to the summation of the $\ell_2$ norm of the pooled signals $\{\mathbf{x}_l\}_{l=1}^{N_{max}}$. The proof is given as follows:
\begin{lemma} \label{lemma:l_2 norm}
The $\ell_2$ norm of the graph signal ${\bf x}$ is equal to the summation of the $\ell_2$ norm of the pooled signals $\{{\bf x}_l\}_{l=1}^{N_{max}}$:
\begin{align}
  ||{\bf x}||_2^2  = \sum\limits_{l=1}^{N_{max}} ||{\bf x}_l||^2_2
\end{align}
\end{lemma}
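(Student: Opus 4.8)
The plan is to express $\|\mathbf{x}\|_2^2$ in terms of the pooling operators and then invoke Lemma~\ref{lemma_1} and Lemma~\ref{lemma:orthogonal}. First I would use Lemma~\ref{lemma_1} to write $\mathbf{x} = \sum_{l=1}^{N_{max}} \Theta_l \mathbf{x}_l$, so that
\begin{align}
\|\mathbf{x}\|_2^2 = \left\langle \sum_{l=1}^{N_{max}} \Theta_l \mathbf{x}_l,\ \sum_{m=1}^{N_{max}} \Theta_m \mathbf{x}_m \right\rangle = \sum_{l=1}^{N_{max}} \sum_{m=1}^{N_{max}} \langle \Theta_l \mathbf{x}_l,\ \Theta_m \mathbf{x}_m \rangle.
\end{align}
The key step is then to argue that the cross terms $\langle \Theta_l \mathbf{x}_l, \Theta_m \mathbf{x}_m \rangle$ vanish whenever $l \neq m$. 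Since $\Theta_l \mathbf{x}_l$ is a linear combination of the columns of $\Theta_l$ and $\Theta_m \mathbf{x}_m$ is a linear combination of the columns of $\Theta_m$, and by Lemma~\ref{lemma:orthogonal} every column of $\Theta_l$ is orthogonal to every column of $\Theta_m$ (they are built from eigenvectors of different subgraphs, or different eigenvectors of the same subgraph, both of which are mutually orthogonal after up-sampling), the inner product of any such linear combinations is zero. Hence only the diagonal terms survive:
\begin{align}
\|\mathbf{x}\|_2^2 = \sum_{l=1}^{N_{max}} \|\Theta_l \mathbf{x}_l\|_2^2.
\end{align}

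The remaining task is to show $\|\Theta_l \mathbf{x}_l\|_2^2 = \|\mathbf{x}_l\|_2^2$ for each $l$. This follows because within a fixed $l$, the columns $\bar{\mathbf{u}}^{(1)}_l, \dots, \bar{\mathbf{u}}^{(K)}_l$ of $\Theta_l$ are orthonormal: each up-sampled eigenvector $\bar{\mathbf{u}}^{(k)}_l = \mathbf{C}^{(k)} \mathbf{u}^{(k)}_l$ has unit norm (since $\mathbf{u}^{(k)}_l$ is a unit eigenvector and up-sampling just pads with zeros) or is the zero vector when $N_k < l$, and different $k$ give vectors supported on disjoint node sets. Writing $\mathbf{x}_l = \Theta_l^T \mathbf{x}$, we get $\|\Theta_l \mathbf{x}_l\|_2^2 = \mathbf{x}_l^T \Theta_l^T \Theta_l \mathbf{x}_l$, and $\Theta_l^T \Theta_l$ is a diagonal matrix with entries $1$ (for active columns) or $0$ (for zero-padded columns); since $\mathbf{x}_l[k] = 0$ precisely on the zero-padded coordinates, $\mathbf{x}_l^T \Theta_l^T \Theta_l \mathbf{x}_l = \mathbf{x}_l^T \mathbf{x}_l = \|\mathbf{x}_l\|_2^2$. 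Combining the two displays completes the proof.

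I expect the main subtlety to be bookkeeping around the zero-padded eigenvectors: when $N_k < l \le N_{max}$ the column $\bar{\mathbf{u}}^{(k)}_l$ is zero, so $\Theta_l$ is not an isometry in the naive sense, but $\mathbf{x}_l[k] = (\bar{\mathbf{u}}^{(k)}_l)^T \mathbf{x} = 0$ on exactly those indices, which is what rescues the norm identity. Everything else is a direct consequence of orthonormality and the two preceding lemmas, so no real obstacle arises beyond making that padding argument precise.
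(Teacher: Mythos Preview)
Your proof is correct and follows essentially the same approach as the paper: use Lemma~\ref{lemma_1} to write $\mathbf{x}=\sum_l \Theta_l \mathbf{x}_l$, then invoke the orthogonality from Lemma~\ref{lemma:orthogonal} to reduce $\|\mathbf{x}\|_2^2$ to a sum of squared coefficients. The paper expands directly into the individual columns $\bar{\mathbf{u}}^{(k)}_l$ and applies Pythagoras in one step, whereas you first separate the $l$-indices and then handle the $k$-indices within each $\Theta_l$; these are the same argument organized in two different orders. Your explicit treatment of the zero-padded columns (where $N_k<l$) is actually more careful than the paper's proof, which passes over that point silently.
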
\label{lem:reconstruction}
\vspace{-1em}
\begin{proof}
From Lemma~\ref{lemma_1} and \ref{lemma:orthogonal}, we have
\begin{align}
\|{\bf x}\|_2^2 &=  \| \sum\limits_{l=1}^{N_{max}} \Theta_l {\bf x}_l \|_2^2  = \left|\left|\sum\limits_{k=0}^K \sum\limits_{l=1}^{N_{max}}  \bar{\bf u}^{(k)}_l \cdot {\bf x}_l[k]\right|\right|^2_2 \nonumber \\
 & = \sum\limits_{k=0}^K \sum\limits_{l=1}^{N_{max}} {\bf x}_l^2[k] = \sum\limits_{l=1}^{N_{max}} ||{\bf x}_l||^2_2 \nonumber
\end{align}
which completes the proof.
\end{proof}

\vspace{0.5em}
\noindent{}\textbf{Property 3: Approximate Energy Preserving} Lemma~\ref{lemma:l_2 norm} describes the energy preserving when $N_{max}$ number of filters are chosen. In practice, we only need $H \ll N_{max}$ of filters for efficient computation. Next we show that even with $H$ number of filters, the filtered signals preserve most of the energy/information.
\begin{theorem}
Let ${\bf x}' = \sum\limits_{l=1}^{H}  \Theta_l {\bf x}_l$ be the graph signal reconstructed only using the first $H$ pooling operators and pooled signals $\{{\bf x}_l\}_{l=1}^{H}$. Then the ratio $\frac{\sum\limits_{l=1}^{N_{H}} ||{\bf x}_l||^2_2}{\sum\limits_{l=1}^{N_{max}} ||{\bf x}_l||^2_2}$ can measure the portion of information being preserved by this ${\bf x}'$. 
\label{thm:energy_preserving_global}
\end{theorem}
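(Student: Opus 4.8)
The plan is to mirror the argument used for the local-view theorem (the first theorem of Section~\ref{sec:local}): interpret ``portion of information preserved'' as the ratio of squared $\ell_2$ norms $\|{\bf x}'\|_2^2/\|{\bf x}\|_2^2$, i.e.\ the fraction of signal energy retained, and then show this ratio equals exactly $\frac{\sum_{l=1}^{H}\|{\bf x}_l\|_2^2}{\sum_{l=1}^{N_{max}}\|{\bf x}_l\|_2^2}$ (the numerator running up to $H$, matching the definition of ${\bf x}'=\sum_{l=1}^{H}\Theta_l{\bf x}_l$).

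First I would expand ${\bf x}'=\sum_{l=1}^{H}\Theta_l{\bf x}_l=\sum_{l=1}^{H}\sum_{k=1}^{K}\bar{\bf u}^{(k)}_l\,{\bf x}_l[k]$ exactly as in Eq.(\ref{eq:sum_theta_x}), but with the outer sum truncated at $H$. The crucial observation is that Lemma~\ref{lemma:orthogonal} applies verbatim to any subcollection of $\{\Theta_l\}$: the vectors $\{\bar{\bf u}^{(k)}_l: 1\le l\le H,\ 1\le k\le K\}$ remain pairwise orthogonal (orthogonality of a set is inherited by its subsets), each nonzero $\bar{\bf u}^{(k)}_l$ has unit norm since ${\bf C}^{(k)}$ merely zero-pads an orthonormal eigenvector, and for $l>N_k$ both $\bar{\bf u}^{(k)}_l$ and ${\bf x}_l[k]$ vanish by the padding construction in Section~\ref{sec:proposed}. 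Hence by Pythagoras, $\|{\bf x}'\|_2^2=\sum_{l=1}^{H}\sum_{k=1}^{K}({\bf x}_l[k])^2=\sum_{l=1}^{H}\|{\bf x}_l\|_2^2$ — the same collapse that appears inside the proof of Lemma~\ref{lemma:l_2 norm}, simply stopped early.

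Next I would invoke Lemma~\ref{lemma:l_2 norm} directly for the denominator, $\|{\bf x}\|_2^2=\sum_{l=1}^{N_{max}}\|{\bf x}_l\|_2^2$, and divide the two identities to obtain $\frac{\|{\bf x}'\|_2^2}{\|{\bf x}\|_2^2}=\frac{\sum_{l=1}^{H}\|{\bf x}_l\|_2^2}{\sum_{l=1}^{N_{max}}\|{\bf x}_l\|_2^2}$, which is the claimed quantity. One may then add the same remark as after the local-view theorem: for natural graph signals the spectral magnitude concentrates on the low-frequency modes, so this ratio is close to $1$ already for $H\ll N_{max}$, and the statement extends coordinatewise to multi-dimensional ${\bf X}$.

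Since every ingredient (perfect reconstruction from Lemma~\ref{lemma_1}, mutual orthogonality from Lemma~\ref{lemma:orthogonal}, energy identity from Lemma~\ref{lemma:l_2 norm}) is already available, there is no serious technical obstacle. The only things to handle with care are the bookkeeping of the padding indices $N_k<l\le N_{max}$ — making sure the truncated double sum still collapses cleanly to $\sum_{l=1}^{H}\|{\bf x}_l\|_2^2$ — and pinning down precisely what notion of ``information'' the energy ratio is meant to formalize, so that the informal phrasing of the theorem is made rigorous in the same way it was for the local view.
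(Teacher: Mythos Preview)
Your proposal is correct and follows essentially the same route as the paper: invoke Lemma~\ref{lemma:l_2 norm} for $\|{\bf x}\|_2^2=\sum_{l=1}^{N_{max}}\|{\bf x}_l\|_2^2$, establish the truncated analogue $\|{\bf x}'\|_2^2=\sum_{l=1}^{H}\|{\bf x}_l\|_2^2$ by the same orthogonality argument, and take the ratio. The only difference is that the paper dispatches the numerator identity with a one-word ``Similarly,'' whereas you spell out the Pythagorean collapse via Lemma~\ref{lemma:orthogonal}; your added care about the padding indices and the meaning of ``information'' is welcome but not a departure in method.
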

\vspace{-1em}
\begin{proof}
As shown in Lemma \ref{lem:reconstruction}, $\|{\bf x}\|^2_2 = \sum\limits_{l=1}^{N_{max}} \|{\bf x}_l\|^2_2$. Similarly, we can show that $\|{\bf x}'\|^2_2 = \sum\limits_{l=1}^{H} \|{\bf x}_l\|^2_2$. The portion of the information being preserved can be represented as 
\begin{align} \label{thm:approximate_energy_preserving}
    \frac{\|{\bf x}'\|^2_2}{\|{\bf x}\|^2_2} = \frac{\sum\limits_{l=1}^{N_{H}} ||{\bf x}_l||^2_2}{\sum\limits_{l=1}^{N_{max}} ||{\bf x}_l||^2_2}.
\end{align}
which completes the proof.
\end{proof}
Since for natural graph signals, the magnitude of the spectral form of the graph signal is concentrated in the first few coefficients~\cite{sandryhaila2013discrete}, which means that even with $H$ filters, EigenPooling can preserve the majority of the information/energy.

\subsection{Permutation Invariance of \m}
$\m$ takes the adjacency matrix $\mathbf{A}$ and the node feature matrix $\mathbf{X}$ as input and aims to learn a vector representation of the graph. The nodes in a graph do not have a specific ordering, i.e., $\mathbf{A}$ and $\mathbf{X}$ can be permuted. Obviously, for the same graph, we want $\m$ to extract the same representation no matter which permutation of $\mathbf{A}$ and $\mathbf{X}$ are used as input. Thus, in this subsection, we prove that $\m$ is permutation invariant, which lays the foundation of using $\m$ for graph classification.
\begin{theorem}
    Let $\mathbf{P} = \{0,1\}^{n \times n}$ be any permutation matrix, then $\m(\mathbf{A}, \mathbf{X}) = \m(\mathbf{P}\mathbf{A}\mathbf{P}^T, \mathbf{P}\mathbf{X})$, i.e., $\m$ is permutation invariant.
\end{theorem}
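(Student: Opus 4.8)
\emph{Proof proposal.} The plan is to track how every component of $\m$ transforms when the input $(\mathbf{A},\mathbf{X})$ is replaced by $(\mathbf{P}\mathbf{A}\mathbf{P}^T,\mathbf{P}\mathbf{X})$, and to prove by induction over the layers that the node-level quantities are \emph{permutation equivariant} (they change by left-multiplication / conjugation by a permutation) while the final graph-level vector is \emph{permutation invariant}. First I would dispatch a single GCN convolutional layer. For the permuted graph the renormalized adjacency is $\mathbf{P}(\mathbf{A}+\mathbf{I})\mathbf{P}^T = \mathbf{P}\tilde{\mathbf{A}}\mathbf{P}^T$, and its degree matrix is $\mathbf{P}\tilde{\mathbf{D}}\mathbf{P}^T$; since conjugating a diagonal matrix by a permutation commutes with raising it to the power $-1/2$, we get $(\mathbf{P}\tilde{\mathbf{D}}\mathbf{P}^T)^{-1/2} = \mathbf{P}\tilde{\mathbf{D}}^{-1/2}\mathbf{P}^T$, hence the propagation matrix becomes $\mathbf{P}\big(\tilde{\mathbf{D}}^{-1/2}\tilde{\mathbf{A}}\tilde{\mathbf{D}}^{-1/2}\big)\mathbf{P}^T$. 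Because $\mathit{ReLU}$ acts entrywise it commutes with left-multiplication by $\mathbf{P}$, so the layer sends $\mathbf{F}^i\mapsto\mathbf{P}\mathbf{F}^i$ to $\mathbf{F}^{i+1}\mapsto\mathbf{P}\mathbf{F}^{i+1}$ (the weights $\mathbf{W}^i$ act on the feature side and are untouched). Composing, the whole stack of GCN layers before a pooling step is permutation equivariant.

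Next I would handle the graph coarsening step, which is where the argument really lives. The claim is that spectral clustering depends on $\mathbf{A}$ only through equivariant objects (the Laplacian and its eigenvectors), so relabeling the nodes by $\mathbf{P}$ yields \emph{the same partition} up to: (i) the node relabeling inside each subgraph induced by $\mathbf{P}$, captured by within-subgraph permutations; and (ii) a relabeling of the $K$ supernodes by some permutation matrix $\mathbf{Q}\in\{0,1\}^{K\times K}$. Concretely the new assignment matrix is $\mathbf{S}'=\mathbf{P}\mathbf{S}\mathbf{Q}^T$, and each new sampling operator $\mathbf{C}'^{(k)}$ equals $\mathbf{P}$ times the old sampling operator of the matching subgraph, composed with its within-subgraph permutation. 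From these identities it follows mechanically (using $\mathbf{P}^T\mathbf{P}=\mathbf{I}$) that $\mathbf{A}_{int}$ and $\mathbf{A}_{ext}$ are conjugated by $\mathbf{P}$, hence $\mathbf{A}_{coar}' = \mathbf{S}'^T\mathbf{A}_{ext}'\mathbf{S}' = \mathbf{Q}\,\mathbf{A}_{coar}\,\mathbf{Q}^T$ --- the same coarsened graph with supernodes relabeled by $\mathbf{Q}$. For $\pooling$: the subgraph Laplacian of the permuted graph is the old $\mathbf{L}^{(k)}$ conjugated by the within-subgraph permutation, so (fixing a convention for the eigenbasis) its eigenvectors are the permuted $\mathbf{u}^{(k)}_l$, the up-sampled eigenvectors become $\mathbf{P}\bar{\mathbf{u}}^{(k)}_l$, the operator becomes $\Theta_l'=\mathbf{P}\Theta_l\mathbf{Q}^T$, and therefore $\mathbf{X}_l' = \Theta_l'^T(\mathbf{P}\mathbf{X}) = \mathbf{Q}\Theta_l^T\mathbf{P}^T\mathbf{P}\mathbf{X} = \mathbf{Q}\mathbf{X}_l$; concatenating over $l$ gives $\mathbf{X}_{coar}' = \mathbf{Q}\mathbf{X}_{coar}$. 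Thus one full ``GCN--coarsen--$\pooling$'' block carries an equivariant pair $(\mathbf{A},\mathbf{X})$ to an equivariant pair $(\mathbf{A}_{coar},\mathbf{X}_{coar})$, now with respect to the new permutation $\mathbf{Q}$.

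I would then close the argument by induction over the pooling levels, with the invariant ``the current adjacency and feature matrices of the two runs differ only by conjugation / left-multiplication by a single permutation matrix.'' After the final coarsening the graph has a single supernode, so the relevant permutation group is trivial and the graph-level vector $\mathbf{X}_{coar}$ is literally identical for the two inputs; pushing it through the permutation-free fully connected layers and the final prediction layer yields $\m(\mathbf{A},\mathbf{X}) = \m(\mathbf{P}\mathbf{A}\mathbf{P}^T,\mathbf{P}\mathbf{X})$.

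The main obstacle is the coarsening step, and in particular making precise the statement that spectral clustering is permutation equivariant and that the eigenbasis of each subgraph Laplacian can be chosen ``consistently'' under relabeling. Eigenvectors are only defined up to sign (and up to rotation within repeated-eigenvalue subspaces), and the $k$-means stage of spectral clustering adds its own nondeterminism, so strictly one needs either a deterministic tie-breaking / sign convention or a genericity assumption to guarantee that the $\mathbf{u}^{(k)}_l$ \emph{themselves} transform equivariantly rather than merely spanning the same subspace; granting that, everything else is routine permutation-matrix bookkeeping.
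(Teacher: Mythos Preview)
Your proposal is correct and follows essentially the same strategy as the paper: show that each building block (GCN layer, graph coarsening via spectral clustering, \pooling) is permutation equivariant at the node level, compose, and observe that equivariance collapses to invariance once the graph is reduced to a single supernode. The paper's proof is considerably terser --- it does not introduce the supernode permutation $\mathbf{Q}$ explicitly and simply asserts that the detected subgraphs and the supernode representations are ``not affected'' by the permutation --- whereas you carry out the bookkeeping with $\mathbf{S}' = \mathbf{P}\mathbf{S}\mathbf{Q}^T$, $\Theta_l' = \mathbf{P}\Theta_l\mathbf{Q}^T$, and the induction over pooling levels, which is more precise. You also correctly flag the eigenvector sign/multiplicity ambiguity and the $k$-means nondeterminism as the real technical caveat; the paper's proof does not address this point at all, so your treatment is in fact more careful than the original on exactly the step where care is needed.
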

\vspace{-1em}
\begin{proof}
    In order to prove that $\m$ is permutation invariant, we only need to show that it's key components GCN, graph coarsening and EigenPooling are permutation invariant. For GCN, before permutation, the output is ${\bf F} = ReLU(\tilde{\bf D}^{-\frac{1}{2}}\tilde{\bf A} \tilde{\bf D}^{-\frac{1}{2}} {\bf X} {\bf W}^{i})$. With permutation, the output becomes
    \begin{equation}
        ReLU\Big((\mathbf{P}\tilde{\bf D}^{-\frac{1}{2}}\tilde{\bf A} \tilde{\bf D}^{-\frac{1}{2}} \mathbf{P}^T) (\mathbf{P}{\bf X}) {\bf W}\Big) = ReLU(\mathbf{P}\tilde{\bf D}^{-\frac{1}{2}}\tilde{\bf A} \tilde{\bf D}^{-\frac{1}{2}} {\bf X} {\bf W}) = \mathbf{P} \mathbf{F} \nonumber
    \end{equation}
where we have used $\mathbf{P}^T \mathbf{P} = \mathbf{I}$. This shows that the effect of permutation on GCN only permutes the order of the node representations but doesn't change the value of the representations. Second, the graph coarsening is done by spectral clustering with $\mathbf{A}$. No matter which permutation we have, the detected subgraphs will not change. Finally, EigenPooling summarizes the information within each subgraph. Since the subgraph structures are not affected by the permutation and the representation of each node in the subgraphs is also not affected by the permutation, we can see that the supernodes' representations after EigenPooling are not affected by the permutation. In addition, the inter-connectivity of supernodes is not affected since it's determined by spectral clustering. Thus, we can say that one step of GCN-Graph Coarsening-EigenPooling is permutation invariant. Since finally $\m$ learns one vector representation of the input graph, we can conclude that the vector representation is the same under any permutation $\mathbf{P}$. 
\end{proof}

\section{Experiment} \label{sec:experimental_results}
In this section, we conduct experiments to evaluate the effectiveness of the proposed framework $\m$. Specifically, we aim to answer two questions:
\begin{itemize}
    \item Can $\m$ improve the graph classification performance by the design of $\pooling$?
    \item How reliable it is to use $H$ number of filters for pooling?
\end{itemize}
We begin by introducing datasets and experimental settings. We then compare $\m$ with representative and state-of-the-art baselines for graph classification to answer the first question. We further conduct analysis on graph signals to verify the reasonableness of using $H$ filters, which answers the second question.

\subsection{Data sets}

To verify whether the proposed framework can hierarchically learn good graph representations for classification, we evaluate $\m$ on 6 widely used benchmark data sets for graph classification~\cite{KKMMN2016}, which includes three protein graph data sets, i.e., {\bf ENZYMES}~\cite{borgwardt2005protein,schomburg2004brenda}, {\bf PROTEINS}~\cite{borgwardt2005protein,dobson2003distinguishing}, and ${\bf D\& D}$~\cite{dobson2003distinguishing,shervashidze2011weisfeiler}; one mutagen data set {\bf Mutagenicity}~\cite{riesen2008iam,kazius2005derivation} (We denoted as MUTAG in Table~\ref{tab:statistics} and Table~\ref{tab:performance}); and two data sets that consist of chemical compounds screened for activity against non-small cell lung cancer and ovarian cancer cell lines, {\bf NCI1} and {\bf NCI109}~\cite{wale2008comparison}. 
Some statistics of the data sets can be found in Table~\ref{tab:statistics}. From the table, we can see that the used data sets contain varied numbers of graphs and have different graph sizes. We include data sets of different domains, sample and graph sizes
to give a comprehensive understanding of how $\m$ performs with data sets under various conditions.
{\footnotesize
\begin{table}[h]
	\begin{center}
		\vspace{-0.15in}
		\caption{Statistics of datasets\label{tab:statistics}}
		\vspace*{-0.15in}
		\begin{tabular}{c |c|c |c|c|c|c} 
			\hline
			&ENZYMES & PROTEINS& D\& D & NCI1 & NCI109 & MUTAG \\ 
			\hline
			$\#$ graphs &600  & 1,113 & 1,178 & 4,110&4,127& 4,337\\
			\hline
			 mean |V| & 32.63& 39.06 & 284.32 & 29.87 &29.68&30.32\\
			\hline
			$\#$ classes & 6 & 2&2 &2 &2& 2\\
			\hline 
		\end{tabular}
			\vspace{-0.15in}
	\end{center}
\end{table}
}
\subsection{Baselines and Experimental Settings}

To compare the performance of graph classification, we consider some representative and state-of-the-art graph neural network models with various pooling layers. Next, we briefly introduce these baseline approaches as well as the experimental settings for them.

\begin{itemize}
    \item {\bf GCN}~\cite{kipf2016semi} is a graph neural network framework proposed for semi-supervised node classification. It learns node representations by aggregating information from neighbors. As the {\bf GCN} model does not consist of a pooling layer, we directly pool the learned node representations as the graph representation. We use it as a baseline to compare whether a hierarchical pooling layer is necessary. 
    \item {\bf GraphSage}~\cite{hamilton2017inductive} is similar as the GCN and provides various aggregation method. As similar in {\bf GCN}, we directly pool the learned node representations as the graph representation. 
    \item {\bf SET2SET}. This baseline is also built upon {\bf GCN}, it is also ``flat'' but uses set2set architecture introduced in~\cite{vinyals2015order} instead of averaging over all the nodes. We select this method to further show whether a hierarchical pooling layer is necessary no matter average or other pooling methods are used. 
    \item {\bf DGCNN}~\cite{zhang2018end} is built upon the GCN layer. The features of nodes are sorted before feeding them into traditional 1-D convolutional and dense layers~\cite{zhang2018end}. This method is also ``flat'' without a hierarchical pooling procedure. 
    \item {\bf Diff-pool}~\cite{ying2018hierarchical} is a graph neural network model designed for graph level representation learning with differential pooling layers. It uses node representations learned by an additional convolutional layer to learn the subgraphs (supernodes) and coarsen the graph based on it. We select this model as it achieves state-of-art performance on the graph classification task.
    \item \m-H represents various variants of the proposed framework \m, where $H$ denotes the number of pooling operators we use for $\pooling$. In this evaluation, we choose $H=1,2,3$.
\end{itemize}

For each of the data sets, we randomly split it to $3$ parts, i.e., $80\%$ as training set, $10\%$ as validation set and $10\%$ as testing set. We repeat the randomly splitting process $10$ times, and the average performance of the $10$ different splits are reported. The parameters of baselines are chosen based on their performance on the validate set.  For the proposed framework, we use the $9$ splits of the training set and validation set to tune the structure of the graph neural network as well as the learning rate. The same structure and learning rate are then used for all $9$ splits. 

Following previous work~\cite{ying2018hierarchical}, we adopt the widely used evaluation metric, i.e., Accuracy, for graph classification to evaluate the performance.

\subsection{Performance on Graph Classification}
Each experiment is run 10 times and the average graph classification performance in terms of accuracy is reported in Table~\ref{tab:performance}. From the table, We make the following observations:
\begin{itemize}
    \item Diff-pool and the $\m$ framework perform better than those methods without a hierarchical pooling procedure in most of the cases. Aggregating the node information hierarchically can help learn better graph representations.
    \item The proposed framework $\m$ shares the same convolutional layer with GCN, GraphSage, and SET2SET. However, the proposed framework (with different $H$) outperforms them in most of the data sets. This further indicates the necessity of the hierarchical pooling procedure. In other words, the proposed $\pooling$ can indeed help the graph classification performance.  
    \item In most of the data sets, we can observe that the variants of the $\m$ with more eigenvectors achieve better performance than those with fewer eigenvectors. Including more eigenvectors, which suggests that we can preserve more information during pooling, can help learn better graph representations in most of the cases. In some of data sets, including more eigenvector does not bring any improvement in performance or even make the performance worse. Theoretically, we are able to preserve more information by using more eigenvectors. However, noise signals may be also preserved, which can be filtered when using fewer eigenvectors. 
    \item The proposed $\m$ achieves the state-of-the-art or at least comparable performance on all the data sets, which shows the effectiveness of the proposed framework $\m$. 
\end{itemize}

To sum up, $\pooling$ can help learn better graph representation and the proposed framework $\m$ with $\pooling$ can achieve state-of-the-art performance in graph classification task.  
{\footnotesize
\begin{table}
	\begin{center}	
		\caption{Performance comparison.}
		\vskip -1em
		\label{tab:performance}
		\begin{tabular} { c c c c c c c c }
		\hline 
			\multirow{ 2}{*}{Baselines} & \multicolumn{7}{c}{Data sets} \\
			&  ENZYMES & PROTEINS & D\&D &  NCI1 & NCI109& MUTAG   \\	
			\hline			
			GCN &  0.440&   0.740  &   0.759& 0.725  & 0.707 & 0.780 \\ [0.5ex]	
			GraphSage& 0.554& 0.746 &0.766 & 0.732& 0.703& 0.785\\
			SET2SET &  0.380& 0.727 &  0.745  &  0.715 &0.686& 0.764\\[0.5ex]\
			DGCNN &0.410&0.732&0.778 & 0.729 & 0.723&0.788\\
			Diff-pool & 0.636&  0.759  &  0.780 & 0.760  & 0.741&  {\bf 0.806}& \\[0.5ex]

			 \m-1&   {\bf 0.650}&  0.751 & 0.775 &   0.760 &  0.746& 0.801 \\[0.5ex]
			\m-2& 0.645 &  0.754 & 0.770 & 0.767 &  0.748& 0.789 \\[0.5ex]
			\m-3 & 0.645 &  {\bf 0.766} &  {\bf 0.786}  & {\bf 0.770} & {\bf 0.749} & 0.795 \\[0.5ex]

			\hline   
		\end{tabular}
	\end{center}
\end{table}
}
\subsection{Understanding Graph Signals} \label{sec:graph_signals}

In this subsection, we investigate the distribution of the Fourier coefficients on signals in real data. We aim to show that for natural graph signals, most of the information/energy concentrates on the first few Fourier models (or eigenvectors). This paves us a way to only use $H$ filters in $\pooling$. Specifically, given one data set, for each graph ${\mathcal{G}_i}$ with $N_i$ nodes and its associated signal ${\bf X}_i \in \mathbb{R}^{N_i\times d}$, we first calculate the graph Fourier transform and obtain the coefficients $\hat{\bf X}_i \in \mathbb{R}^{N_i\times d}$. We then calculate the following ratio: $r_i^H =   \frac{\|\hat{\bf X}_i[1:H,:]\|_2^2}{\|\hat{\bf X}_i\|_2^2}$,
where $\hat{\bf X}_i[1:H,:]$ denotes the first $i$ rows of the matrix $\hat{\bf X}_i$ for various values of $H$. According to Theorem~\ref{thm:energy_preserving_global}, this ratio measures how much information can be preserved by the first $H$ coefficients. We then average the ratio over the entire data set and obtain
\begin{align}
    r^H = \sum\limits_{i} r_i^H.
\end{align}
Note that if $H>N_i$, we set $r^H_i=1$. We visualize the ratio for each of the data set up to $H=40$ in Figure~\ref{fig:coef_results}. As shown in Figure~\ref{fig:coef_results}, for most of the data set, the magnitude of the coefficients concentrated in the first few coefficients, which demonstrates the reasonableness of using only $H \ll N_{max}$ filters in $\pooling$. In addition, using $H$ filters can save computational cost.

\begin{figure}
	\begin{center}
		\subfigure[ENZYMES]{\label{fig:ENZYMES}\includegraphics[scale=0.26]{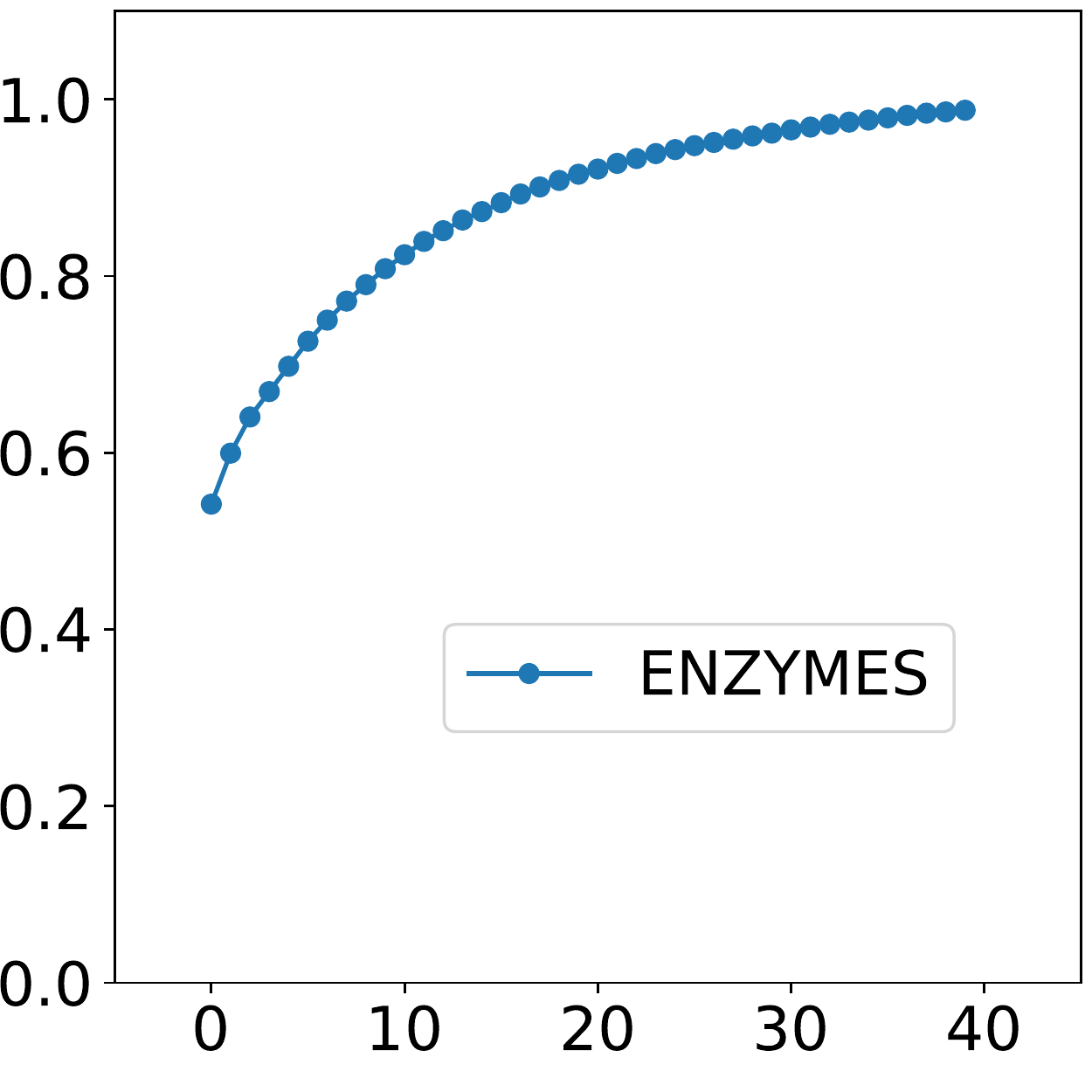}}
		\subfigure[PROTEINS]{\label{fig:Proteins}\includegraphics[scale=0.26]{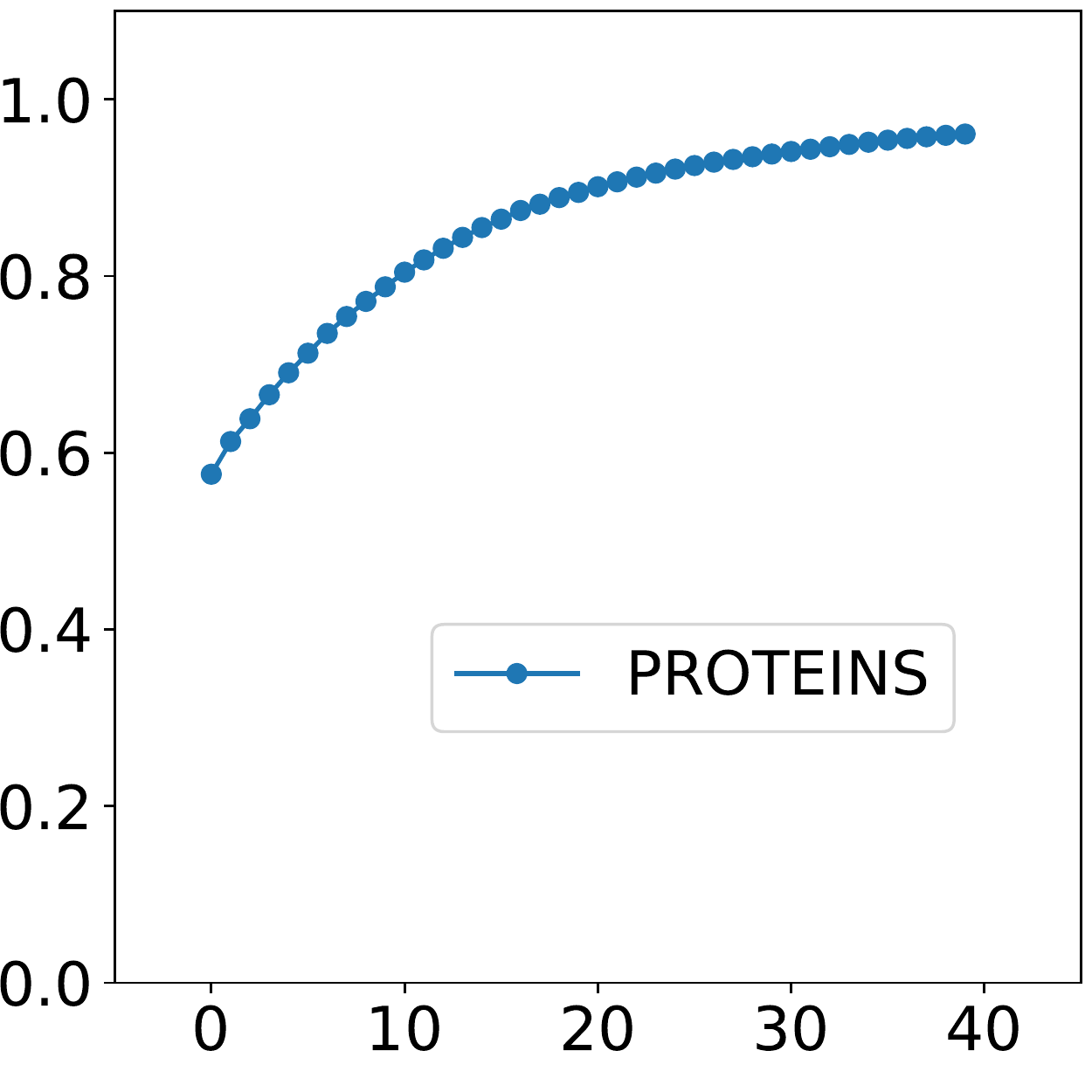}}
		\vskip -1em
		\subfigure[NCI1]{\label{fig:NCI1}\includegraphics[scale=0.26]{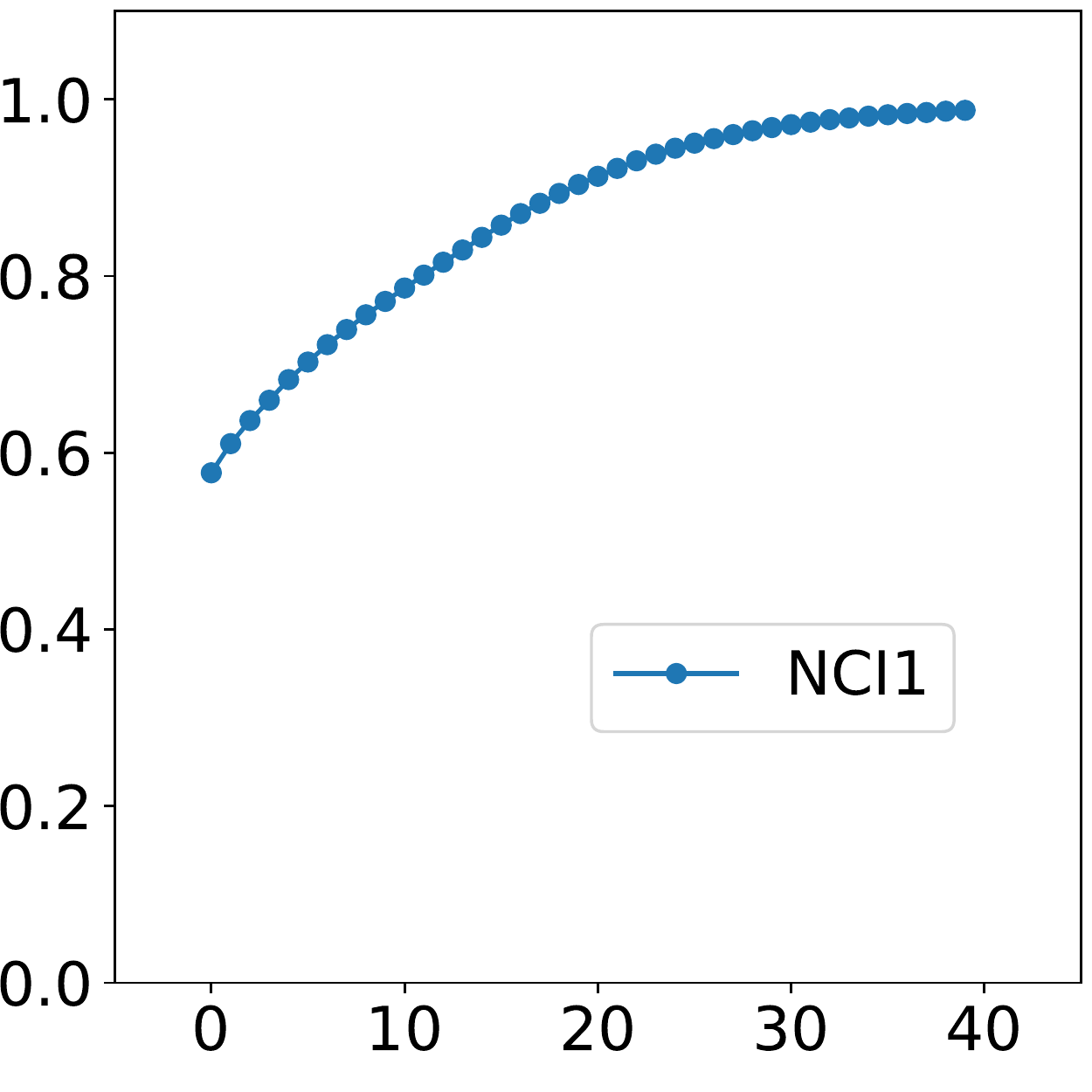}}
		\subfigure[NCI109]{\label{fig:NCI109}\includegraphics[scale=0.26]{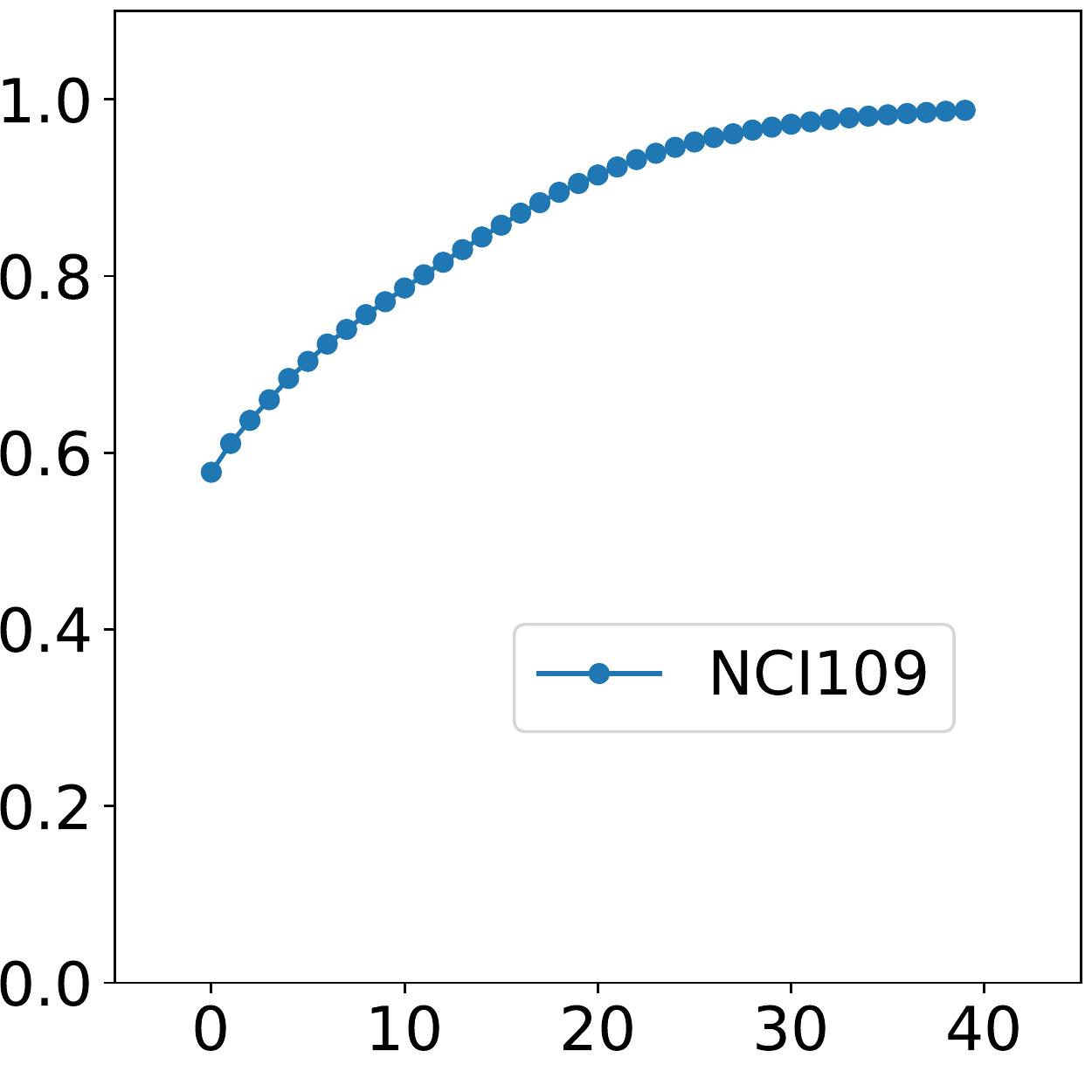}}
        \vskip -1em
		\subfigure[Mutagenicity]{\label{fig:Mutagenicity}\includegraphics[scale=0.26]{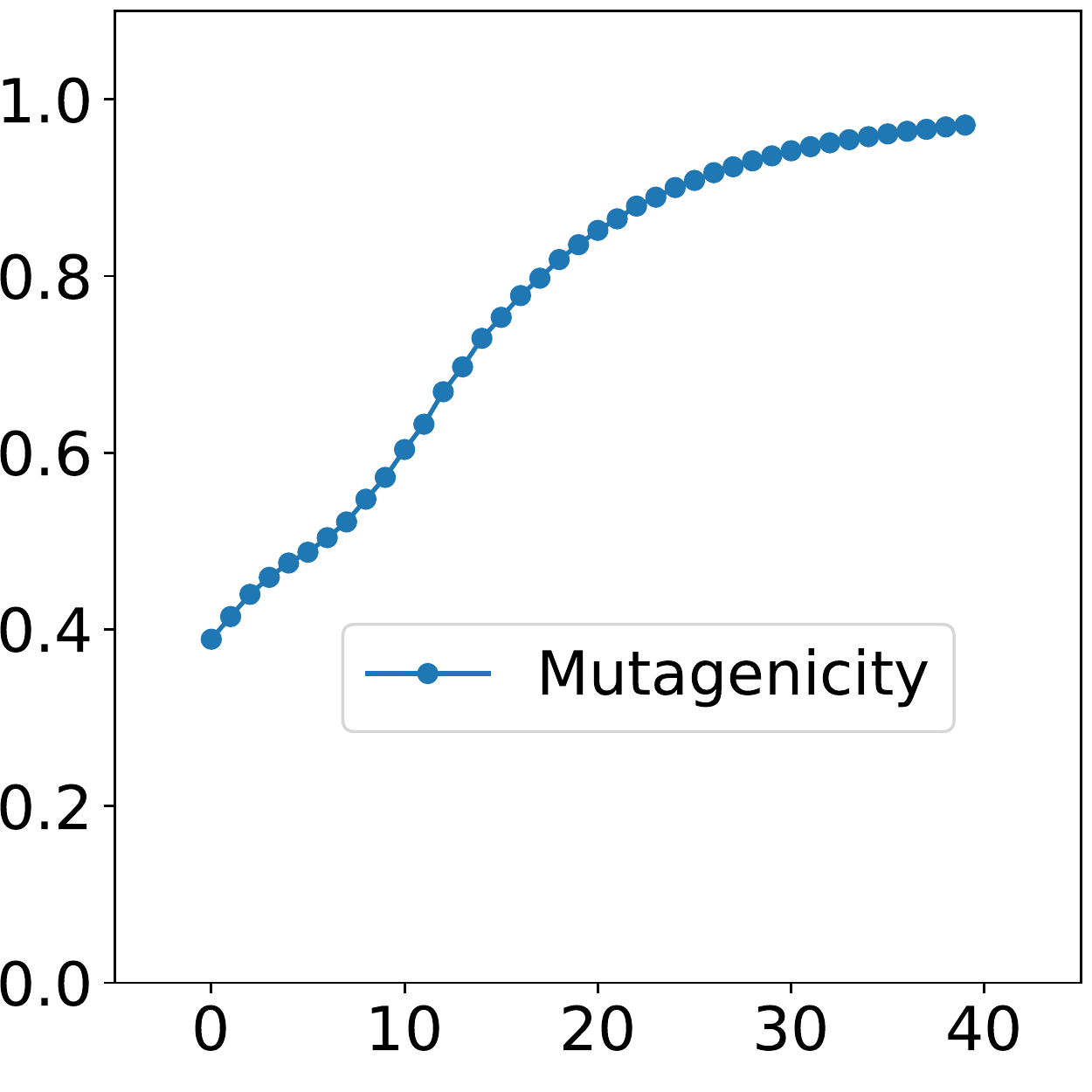}}
		\subfigure[DD]{\label{fig:DD}\includegraphics[scale=0.26]{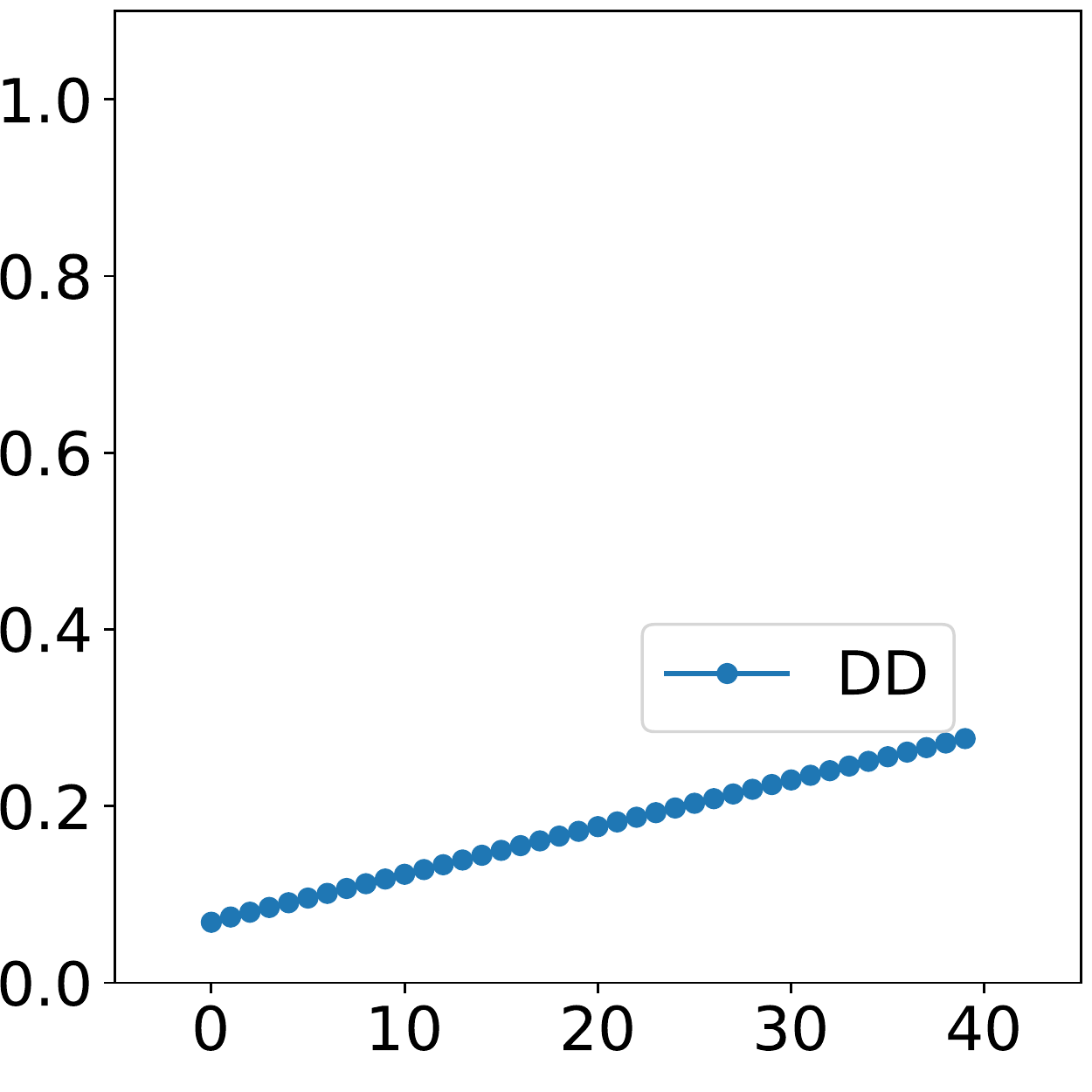}}
	\end{center}
		\vspace{-0.2in}
	\caption{Understanding graph signals}
		\vspace{-0.2in}
	\label{fig:coef_results}
\end{figure}

\section{Related Work} \label{sec:related_work}

In recent years, graph neural network models, which try to extend deep neural network models to graph structured data, have attracted increasing interests. These graph neural network models have been applied to various applications in many different areas. In~\cite{kipf2016semi}, a graph neural network model that tries to learn node representation by aggregating the node features from its neighbors, is applied to perform semi-supervised node classification. Similar methods were later proposed to further enhance the performance by including attention mechanism~\cite{velivckovic2017graph}. GraphSage~\cite{ying2018hierarchical}, which allows more flexible aggregation procedure, was designed for the same task. There are some graph neural networks models designed to reason the dynamics of physical systems where the model is applied to predict future states of nodes given their previous states~\cite{battaglia2016interaction,sanchez2018graph}. Most of the aforementioned methods can fit in the framework of ``message passing'' neural networks~\cite{gilmer2017neural}, which mainly involves transforming, propagating and aggregating node features across the graph through edges. Another stream of graph neural networks was developed based on the graph Fourier transform~\cite{defferrard2016convolutional,bruna2013spectral,henaff2015deep,levie2017cayleynets}. The features are first transferred to the spectral domain,  next filtered with learnable filters and then transferred back to the spatial domain. The connection between these two streams of works is shown in~\cite{defferrard2016convolutional,kipf2016semi}. Graph neural networks have also been extended to different types of graphs~\cite{ma2018dynamic,ma2019multi,derr2018signed} and applied to various applications~\cite{wang2018zero,fan2019graph,ying2018graph,monti2017geometric,schlichtkrull2018modeling,trivedi2017know}.
Comprehensive surveys on graph neural networks can be found in~\cite{zhou2018graph,wu2019comprehensive,zhang2018deep,battaglia2018relational}.

However, the design of the graph neural network layers is inherently ``flat'', which means the output of pure graph neural network layers is node representations for all the nodes in the graph. To apply graph neural networks to the graph classification task, an approach to summarize the learned node representations and generate the graph representation is needed. A simple way to generate the graph representations is to globally combine the node representations. Different combination approaches have been investigated, which include averaging over all node representation as the graph representation~\cite{duvenaud2015convolutional}, adding a ``virtual node'' connected to all the nodes in the graph and using its node representation as the graph representation~\cite{li2015gated}, and using conventional fully connected layers or convolutional layers after arranging the graph to the same size~\cite{zhang2018end,gilmer2017neural}. However, these global pooling methods cannot hierarchically learn graph representations, thus ignoring important information in the graph structure. There are a few recent works~\cite{defferrard2016convolutional,ying2018hierarchical,simonovsky2017dynamic,fey2018splinecnn} investigating learning graph representations with a hierarchical pooling procedure. These methods usually involve two steps 1) coarsen a graph by grouping nodes into supernode to form a hierarchical structure and 2) learn supernode representations level by level and finally obtain the graph representation. These methods use mean-pooling or max-pooling when they generate supernodes representation, which neglects the important structure information in the subgraphs. In this paper, we propose a pooling operator based on local graph Fourier transform, which utilizes the subgraph structure as well as the node features for generating the supernode representations.
\section{Conclusion} \label{sec:conlcusion}
In this paper, we design $\pooling$, a pooling operator based on local graph Fourier transform, which can extract subgraph information utilizing both node features and structure of the subgraph. We provide a theoretical analysis of the pooling operator from both local and global perspectives. The pooling operator together with a subgraph-based graph coarsening method forms the pooling layer, which can be incorporated into any graph neural networks to hierarchically learn graph level representations. We further proposed a graph neural network framework $\m$ by combining the proposed pooling layers with the GCN convolutional layers. Comprehensive graph classification experiments were conducted on $6$ commonly used graph classification benchmarks. Our proposed framework achieves state-of-the-art performance on most of the data sets, which demonstrates its effectiveness. 

\section{Acknowledgements}
Yao Ma and Jiliang Tang are supported by the National Science Foundation (NSF) under grant numbers IIS-1714741, IIS-1715940, IIS-1845081 and CNS-1815636, and a grant from Criteo Faculty Research Award.

\balance
\bibliographystyle{ACM-Reference-Format}
\bibliography{sample-base-abbre.bib}

\end{document}